\newcommand{\rE}{\mathrm{E}}
\newcommand{\Var}{\mathrm{Var}}
\newcommand{\cM}{\mathcal{M}}
\newcommand{\cP}{\mathcal{P}}
\newcommand{\cF}{\mathcal{F}}
\newcommand{\cG}{\mathcal{G}}
\newcommand{\cH}{\mathcal{H}}
\newcommand{\cN}{\mathcal{N}}
\newcommand{\cX}{\mathcal{X}}
\newcommand{\bR}{\mathbf{R}}
\newcommand{\argmin}{\mathop{\rm argmin}\limits}
\theoremstyle{plain}
\newtheorem{thm}{Theorem}
\newtheorem{lem}[thm]{Lemma}
\newtheorem{prop}[thm]{Proposition}
\newtheorem{cor}[thm]{Corollary}
\theoremstyle{definition}
\newtheorem{df}[thm]{Definition}
\newtheorem{assump}[thm]{Assumption}
\newtheorem{rmk}[thm]{Remark}
\title{Empirical Hypothesis Space Reduction}
\author{%
  Akihiro Yabe\\
  NEC corporation\\
  \texttt{a-yabe@cq.jp.nec.com} \\
  \and
  Takanori Maehara\\
  RIKEN AIP\\
  \texttt{takanori.maehara@riken.jp} \\
}
\date{}
\begin{document}

\maketitle

\begin{abstract}
	Selecting appropriate regularization coefficients is critical to performance 
	with respect to regularized empirical risk minimization problems. 
	Existing theoretical approaches attempt to determine the coefficients in order for regularized empirical objectives to be upper-bounds of true objectives, uniformly over a hypothesis space.
	Such an approach is, however, known to be over-conservative, especially in high-dimensional settings with large hypothesis space. 
	In fact, an existing generalization error bound in variance-based regularization is $O(\sqrt{d \log n/n})$, where $d$ is the dimension of hypothesis space, and thus the number of samples required for convergence linearly increases with respect to $d$.
	This paper proposes an algorithm that calculates regularization coefficient, one which results in faster convergence of generalization error $O(\sqrt{\log n/n})$ and whose leading term is independent of the dimension $d$.
	This faster convergence without dependence on the size of the hypothesis space is achieved by means of empirical hypothesis space reduction, which, with high probability, successfully reduces a hypothesis space without losing the true optimum solution.
	Calculation of uniform upper bounds over reduced spaces, then, 
	enables acceleration of the convergence of generalization error.
\end{abstract} 

\section{Introduction}
Regularization is a standard method for improving generalization
by means of penalizing risky hypotheses.
This paper considers the following regularized empirical risk minimization problem:
\begin{align}
	h_{\lambda}(x^n) := \argmin_{h \in \cH} L(h; x^n) + \frac{\lambda}{\sqrt{n}} r(h), \label{intro1}
\end{align}
where $\cH$ is a hypothesis space, $L(\cdot; x^n)$ is an empirical risk function determined by $n$ i.i.d. samples $x^n = (x_1, \dots, x_n)$, $\lambda \in \bR_+$ is a regularization scale, and $r: \cH \to \bR_+$ is a regularizer.
Examples of regularizers include $\ell_p$-regularizers ($p\geq 0$) for penalizing large norms~\cite{kakade2009complexity,shalev2014understanding}
and variance-based regularizers for penalizing high variances~\cite{maurer2009empirical,namkoong2017variance}.
With a suitable choice of $\lambda$ and $r$,
we can improve the convergence of generalization error $L^*(h_{\lambda}(x^n)) - L^*_{\min}$,
where $L^*: \cH \to \bR$ is the true risk function and $L^*_{\min}$ is its minimum.

When a regularizer $r$ is fixed, performance depends solely on its coefficient $\lambda$; hence, this must be carefully determined.
In the context of variance-based regularization,
Maurer and Pontil~\cite{maurer2009empirical} showed the following generalization bounds:
Let $\cM(n)$ be the covering number of $\cH$, a detailed definition of which is given in the subsequent section.
Given any probability $\delta >0$,
defining  $\lambda = \sqrt{\log(\cM(n)/\delta)}$
results in a solution $h_{\lambda}$ in \eqref{intro1}
that satisfies the following bounds for generalization error with a probability of at least $1-\delta$:
\begin{align*}
	L^*(h_{\lambda}(X^n)) - L^*_{\min} \leq \sqrt{ \frac{32 V^*\log(\cM(n)/\delta )}{n}} +  O\left( \frac{ \log \cM(n)}{n} \right).
\end{align*}
Here, $V^*$ is the variance of the loss function with the true optimum hypothesis.
They defined the scale $\lambda$ in order for regularized empirical objective (i.e., RHS of \eqref{intro1})
to become an upper-bound of the true objective $L^*$ uniformly over $h \in \cH$, with a probability of $1-\delta$.
Minimization of this probabilistic upper-bound 
contributes to decreasing the bounded true objective,
and thus the resulting empirical hypothesis $h_{\lambda}(X^n)$ is guaranteed
with respect to the true optimum $L^*_{\min}$
with the same probability.
Such a scale $\lambda$ for uniform-bounding over $\cH$ is required to be proportional to the \emph{size} of the hypothesis space $\cH$, or the logarithm of the covering number $\cM$.
These criteria thus make it possible to control the scale $\lambda$
by means of confidence probability $\delta$.

Unfortunately, such theoretical criteria for determining regularization scale is
known to be impractical, especially in high dimensional setting.
Roughly speaking, the above $\lambda$ and the resulting generalization bound
are proportional to $\log\cM(n)$, and
if $\cH$ is $d$-dimensional space,
then $\log\cM(n) \geq d \log n$.
This implies that the number of samples $n$ required for the convergence
is linearly dependent on the dimension of $\cH$.
%which explains the above criticism.
%
%
%
\paragraph{Our contribution}
We propose an algorithm for calculating a regularization scale that results in faster convergence of generalization error.
Our algorithm consists of two parts:
the first is  \emph{empirical hypothesis space reduction},
in which, with high probability, the hypothesis space is reduced through the use of empirical samples, without loss of the true optimum solution.
The second part is calculation of uniform bounds on the basis of reduced space.
Since the reduced space is asymptotically singleton (assuming the uniqueness of the optimum hypothesis),
our algorithm achieves dimensional-free convergence of generalization error.
In particular, for the variance-based regularizer, assuming locally  quadratic true risk $L^*$,
the hypothesis $h(X^n)$ calculated by our algorithm achieves the following generalization error:
\begin{align*}
	L^*(h(X^n)) - L^*_{\min} \leq \sqrt{ \frac{8 V^* \log(c n/\delta)}{ n}} + O\left( \sqrt{\frac{\log n}{n^{1 + 2/d}} } \right).
\end{align*}
Here $c$ is a constant which is independent of $n$, and $d$ is the dimension of the hypothesis space $\cH$.
Note that the coefficient of the dominant $O(\log n/\sqrt{n})$ term is independent of the size of hypothesis space $\cH$. %; hence, it avoids the curse of dimensionality.
Our algorithm can be applied to any regularizer, such as the $\ell_p$ regularizer or the variance-based regularizer, 
and any construction of uniform bounds, e.g., based on VC dimension~\cite{vapnik1971uniform}, Rademacher complexity~\cite{bartlett2002rademacher}, or covering number~\cite{maurer2009empirical}.
Our algorithm can thus accelerate the convergence of the generalization error for a very general
class of regularized empirical risk minimization problems.
\paragraph{Related studies}
Reduction of hypothesis space for speeding up convergence
has previously been proposed~\cite{boucheron2005theory,shapiro2009lectures,bartlett2005local,koltchinskii2006local},
and the most relevant study can be found in the context of variance-based regularization.
Namkoong and Duchi~\cite{namkoong2017variance} extended the idea of~\cite{maurer2009empirical}
by using the technique of distributionally robust optimization~\cite{ben2013robust,bertsimas2017robust},
and proposed several uniform bounds 
on the basis of covering number, VC dimension~\cite{vapnik1971uniform}, and Rademacher complexity~\cite{bartlett2002rademacher}.
For tighter construction of such uniform bounds,
\cite[Theorem 4]{namkoong2017variance} presents
the calculation of Rademacher complexity
on the basis of restricted hypothesis space.
One technical difference is that
we conduct restriction on the basis of
a regularized empirical solution,
while \cite{namkoong2017variance} (and related techniques in the study of local Rademacher complexity~\cite{bartlett2005local,koltchinskii2006local}) have done so on the basis of
a non-regularized empirical solution.
Our technique makes possible simpler and more unified analysis with fewer assumptions,
which makes it applicable to arbitrary regularization settings.

\section{Preliminary}\label{secPre}
\subsection{Risk minimization problem}
%m
Let $\cH$ be a hypothetical space, $\cX \subseteq \bR^m$ be a sample space, and $\ell: \cH \times \cX \to \bR_+$ be a loss function.
Let $\cP$ be a distribution over $\cX$, and $L^*$ be a risk function defined by $L^*(h):= \rE_{X \sim \cP} \left[ \ell(h, X) \right] $.
Our goal is to find a hypothesis that minimize the risk function $L^*$:
\begin{align}
	L^*_{\min} := \min_{h \in \cH} L^*(h). \label{trueSol}
\end{align}
The true distribution $\cP$, however, is rarely available in practice.
Thus, we here assume that we have $n$ i.i.d. samples $x^n = (x_1, \dots, x_n)$ from $\cP$.
Our aim is to create an algorithm
that with high probability outputs optimized hypothesis $h(X^n)$
with small generalization error $L^*(h(X^n)) -L^*_{\min}$ in sample distribution $X^n \sim \cP^n$.

Hereafter, 
we tacitly assume that $n$ is an integer satisfying $n \geq 6$.
We represent random variable drawn from $\cP$ by upper case $X$,
and an element of $\cX$ by lower case $x$.
\subsection{Existing study: variance-based regularization}\label{subsecExist}
This section introduces the generalization error bound proven by \cite{maurer2009empirical} for variance-based regularization.
Let us first introduce the problem setting in variance-based regularization.
We here assume that the value range of the loss function $\ell$ is $[0,1]$.
Given samples $x^n = (x_1,x_2,\dots,x_n) \in \cX^n$, let us define empirical risk function $L(h ; x^n)$ by
\begin{align*}
	L(h; x^n) := \frac{1}{n} \sum_{i=1}^n \ell(h, x_i).
\end{align*}
For each $h \in \cH$, let us define the true variance $V^*(h)$ and empirical variance $V_n(h;x^n)$ of loss $\ell(h,\cdot)$ by
\begin{align*}
	V^*(h) &:= \rE_{X \sim \cP} [ (\ell(h,X) - L^*(h))^2 ], \\
	V_n(h ; x^n) &:= \frac{1}{n(n-1)} \sum_{1 \leq i<j \leq n} (\ell(h,x_i) - \ell(h,x_j))^2 .
\end{align*}
For a regularization scale $\lambda \in \bR_+$, the empirical solution $h_{\lambda}(x^n)$ in the study of variance-based regularization is defined as follows:
\begin{align}
	h_{\lambda}(x^n) := \argmin_{h \in \cH} L(h; x^n) + \frac{\lambda}{\sqrt{n}} \sqrt{V_n(h;x^n)}. \label{dfVBR}
\end{align}
For this setting, the following error bound is proven by~\cite{maurer2009empirical}:
For $\varepsilon_n > 0$ and $x^{2n} \in \cX^{2n}$,
let us define $\cM’(\varepsilon, x^{2n})$ as the minimum cardinality $|\cH_{0}|$ of $\cH_{0} \subseteq \cH$ satisfying the following property:
for all $h \in \cH$,
there exists $h_0 \in \cH_{0}$ satisfying $|\ell(h,x_i) - \ell(h_0, x_i)| \leq \varepsilon$ for all $i=1,2,\dots,2n $.
We then introduce the covering number $\cM(n)$ as $\cM(n) := 30 \max_{x^{2n} \in \cX^{2n}} \cM’(1/n, x^{2n})$.
Let us denote the true minimizer of $L^*$ by $h^* \in \cH$,
and the following generalization error bound then holds.
\begin{thm}[{\cite[Theorem 15]{maurer2009empirical}}] \label{thmPrev}
	For $\delta \in (0,1)$ and $\lambda = \sqrt{18 \log (\cM(n)/\delta)}$,
	the optimized hypothesis \eqref{dfVBR} satisfies the following bound with a probability of at least $1-\delta$ in sample distribution $X^n \sim \cP^n$:
	\begin{align*}
		L^*(h_{\lambda}(X^n)) - L^*_{\min}  \leq \sqrt{\frac{32V^*(h^*) \log(\cM(n)/\delta)}{n}} + \frac{22  \log(\cM(n)/\delta)}{n}.
	\end{align*}
\end{thm}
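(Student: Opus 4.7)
The plan is to combine Bernstein's inequality (which gives variance-sensitive deviation bounds) with a covering argument over $\cH$, and then exploit the optimality of $h_{\lambda}$ against the reference point $h^{*}$. Concretely, I would first establish the following uniform upper bound: with probability at least $1-\delta$, simultaneously for every $h \in \cH$,
\begin{align*}
L^{*}(h) \;\leq\; L(h; X^{n}) + \frac{\lambda}{\sqrt{n}}\sqrt{V_{n}(h; X^{n})} + \frac{c\log(\cM(n)/\delta)}{n}
\end{align*}
for an absolute constant $c$. This is the ``regularized empirical objective dominates the true risk'' inequality whose design dictated the choice $\lambda = \sqrt{18 \log(\cM(n)/\delta)}$.

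To prove that uniform bound I would proceed in three substeps. First, for a single fixed $h$, Bernstein's inequality yields $L^{*}(h) - L(h;X^{n}) \lesssim \sqrt{V^{*}(h)\log(1/\delta)/n} + \log(1/\delta)/n$. Second, a two-sided concentration bound for the U-statistic $V_{n}(h;X^{n})$ lets me replace $\sqrt{V^{*}(h)}$ by $\sqrt{V_{n}(h;X^{n})}$, at the cost of a lower-order additive term. Third, I extend the bound from a single $h$ to all $h \in \cH$ via the covering number: by the definition of $\cM'(1/n, x^{2n})$, any $h$ is approximated uniformly over any $2n$-sample by some $h_{0}$ in a cover of size $\cM(n)/30$; combining this with a ghost-sample symmetrization (which is why $2n$ samples and the factor $30$ appear in the definition of $\cM(n)$) converts the single-$h$ bound into the desired uniform one with $\log(1/\delta)$ replaced by $\log(\cM(n)/\delta)$.

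Having established the uniform upper bound, I would apply it to $h = h_{\lambda}(X^{n})$ and then use the defining inequality
\begin{align*}
L(h_{\lambda};X^{n}) + \frac{\lambda}{\sqrt{n}}\sqrt{V_{n}(h_{\lambda};X^{n})} \;\leq\; L(h^{*};X^{n}) + \frac{\lambda}{\sqrt{n}}\sqrt{V_{n}(h^{*};X^{n})},
\end{align*}
which follows from $h_{\lambda}$ being the minimizer of \eqref{dfVBR}. The right-hand side is controlled at the single point $h^{*}$: a one-sided Bernstein inequality gives $L(h^{*};X^{n}) \leq L^{*}(h^{*}) + O(\sqrt{V^{*}(h^{*})\log(1/\delta)/n}) + O(\log(1/\delta)/n)$, and a concentration bound for $\sqrt{V_{n}(h^{*};X^{n})}$ yields $\sqrt{V_{n}(h^{*};X^{n})} \leq \sqrt{V^{*}(h^{*})} + O(\sqrt{\log(1/\delta)/n})$. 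Substituting $\lambda = \sqrt{18\log(\cM(n)/\delta)}$, collecting the $\sqrt{V^{*}(h^{*})\log(\cM(n)/\delta)/n}$ contributions and absorbing the lower-order pieces into the $22\log(\cM(n)/\delta)/n$ term gives the stated bound.

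The main obstacle is the uniform bound step: the covering argument has to produce exactly the constant $32$ in front of $V^{*}(h^{*})$, which requires carefully tracking the Bernstein constants, the discretization error of the cover at scale $1/n$, and the symmetrization factor that motivates the $2n$-sample cover. Once the uniform upper bound is in hand with the right leading constant and the right dependence on $\cM(n)/\delta$, the remainder of the argument is the essentially algebraic manipulation sketched above.
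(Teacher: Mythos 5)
There is no in-paper proof to compare your attempt against: the paper quotes this statement verbatim as Theorem~15 of Maurer and Pontil~\cite{maurer2009empirical} and never proves it, using it only as the baseline that Theorem~\ref{thmMain} is meant to improve upon. Judged on its own merits, your plan reconstructs essentially the original Maurer--Pontil argument: (1) a uniform bound of the form ``$L^*(h) \leq L(h;X^n) + \frac{\lambda}{\sqrt{n}}\sqrt{V_n(h;X^n)} + O(\log(\cM(n)/\delta)/n)$ for all $h \in \cH$'', proved by an empirical Bernstein inequality plus ghost-sample symmetrization over the $\cM'(1/n, x^{2n})$ cover (this is their Theorem~6, and it is exactly what dictates $\lambda = \sqrt{18\log(\cM(n)/\delta)}$); (2) the optimality of $h_{\lambda}$ against $h^*$; (3) pointwise Bernstein and sample-variance concentration at the single hypothesis $h^*$. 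So the structure is right and is the canonical one.

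Two corrections to the details. First, you misplace where the constant $32$ comes from: it is not produced by the covering argument. The uniform bound only needs to carry the coefficient $\lambda^2 = 18$; the $32$ emerges in the final collection step as $(\sqrt{18}+\sqrt{2})^2 = 32$, where $\sqrt{18}$ comes from the penalty term $\frac{\lambda}{\sqrt{n}}\sqrt{V_n(h^*;X^n)}$ after replacing $\sqrt{V_n(h^*;X^n)}$ by $\sqrt{V^*(h^*)} + O(\sqrt{\log(1/\delta)/n})$, and $\sqrt{2}$ comes from the one-sided Bernstein fluctuation of $L(h^*;X^n)$ around $L^*(h^*)$. The delicate constant tracking you flag therefore sits in the easy single-point step, not in the symmetrization step. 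Second, your sketch invokes three high-probability events (the uniform bound, Bernstein at $h^*$, and the variance concentration at $h^*$) but never allocates the confidence budget among them; as written the conclusion would only hold with probability $1-3\delta$. You must split $\delta$ across the events and check that the stated constants survive the splitting --- the multiplicative factor $30$ built into the definition $\cM(n) := 30\max_{x^{2n}}\cM'(1/n,x^{2n})$ exists precisely so that such union-bound overheads can be absorbed into $\log(\cM(n)/\delta)$.
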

The growth rate of $\cM(n)$ in $n$ is polynomial in many cases~\cite{maurer2009empirical},
and it is known that $\log \cM(n) = O(\log^{3/2} n)$
for the bounded linear functionals in the reproducing kernel Hilbert space associated with Gaussian kernels~\cite{guo2002covering}.
If the hypothesis space $\cH$ is embedded in a  real space $\bR^d$,
then, typically, the term $\log \cM(n)$ is linearly dependent on $d$.
Thus, the size of the term $\log \cM(n)$ can be understood as $\log \cM(n) \gtrapprox d \log n$.

\section{Main Results}\label{secReduction}
\subsection{Terminology}
This section introduces terminology that we employ. 
Any object with a $*$ mark is intended to be unknown to 
the algorithm that we wish to create.
Let $r^*: \cH \to \bR_+$ denote an ideal but unknown regularizer,
and $r_n: \cH \times \cX^N \to \bR_+$ denote its empirical estimates.
A typical example of $r^*(h)$ and $r_n(h;x^n)$ are the square-root of
the true variance $\sqrt{V^*(h)}$ and the empirical variance $\sqrt{V_n(h;x^n) }$ of the loss $\ell$, respectively.
For a regularizer without uncertainty, such as $\ell_1$-regularizer and $\ell_2$-regularizer,
we have $r^* = r_n$.
We define the notion of the accuracy of an estimator $r_n$ as follows.
\begin{df}\label{df1}
	A pair $(r_n, \Delta_n)$ is referred to as a \emph{guaranteed empirical regularizer} (with respect to $r^*$)
	if the following inequality bound holds with probability at least $1 - \delta/N$ in sample distribution $X^n \sim \cP^n$:
	\begin{align}
		|r^*(h) - r_n(h;X^n)| \leq \Delta_n, \quad \forall h \in \cH. \label{dfEmpReg}
	\end{align}
\end{df}

For a regularizer $r_n$,
ideally, our algorithm would calculate a maximum, $\max_{h \in \cG} r_n(h; x^n)$,
over non-convex subspace $\cG \subseteq \cH$.
Such a calculation would in general, however, be computationally intractable.
As it is quite reasonable to assume that we can calculate 
some upper-bound $u_n(\cG)$ of $\max_{h \in \cG} r_n(h; x^n)$,
we can then impose some consistency on $u_n$, including monotonicity with respect to $\cG$, as follows:
\begin{df}\label{df2}
	A function $u_n: 2^{\cH} \times \cX \to \bR_+ \cup \{\infty \}$ is referred to as an \emph{empirical regularization upper-bound} if
	the following holds for any $x^n \in \cX^n$ and $\cF \subseteq \cG \subseteq \cH$ with $\sup_{h \in \cG} r_n(h;x^n) < \infty$:
	\begin{align*}
		\sup_{h \in \cG} r_n(h;x^n) &\leq u_n(\cG; x^n) < \infty ,\\
		u_n(\cF; x^n) &\leq u_n(\cG; x^n).
	\end{align*}
	We refer to $u^*_n: 2^{\cH} \to \bR_+$ as a \emph{true regularization upper-bound} if for any $x^n$ and $\cG \subseteq \cH$,
	it holds that 
	\begin{align}
		u_n (\cG;x^n) \leq u_n^*(\cG) < \infty.\label{dfU}
	\end{align}
\end{df}
Note that, for our proof, it is sufficient to require \eqref{dfU} for $x^n $ satisfying \eqref{dfEmpReg}.
Next, we define the notion of a uniform bound,
which is a standard notion that has been utilized for bounding generalization error in previous studies~\cite{maurer2009empirical,namkoong2017variance}.
\begin{df}\label{df3}
	A pair $(\alpha_n, \beta_n) \in \bR_+^2$ of values is referred to as a \emph{uniform bound} if the following holds with a probability of at least $1 - \delta/n$ in $X^n \in \cP^n$:
	\begin{align}
		|L^*(h) - L(h; X^n) | \leq \frac{\alpha_n }{\sqrt{n}} r^*(h) + \beta_n, \quad \forall h \in \cH. \label{dfUnif}
	\end{align}
\end{df}
Let us next propose a novel generalization of the uniform bound, referred to as a \emph{spatial uniform bound},
for calculating a uniform bound over reduced subspace $\cF \subseteq \cH$.
\begin{df}\label{df4}
	A pair of functions $(\mu_n, \nu^*_n)$ where $\mu_n, \nu_n^* : 2^{\cH} \to \bR_+$ is referred to as a \emph{spatial uniform bound} if 
	the following two properties hold:
	
	(i) For any $\cF \subseteq \cG \subseteq \cH$, it holds that $\mu_n(\cF) \leq \mu_n(\cG) \leq \alpha_n$ and $\nu_n^*(\cF) \leq \nu_n^*(\cG)$.
	
	(ii) For any $\cF \subseteq \cH$, the following holds with a probability at least $1 - (n-2)\delta/n $ in $X^n \sim \cP^n$:
	\begin{align}
		|L^*(h) - L(h; X^n) | \leq  \frac{\mu_n(\cF)}{\sqrt{n}}r^*(h) + \nu_n^*(\cF), \quad \forall h \in \cF. \label{dfSpatial}
	\end{align}
\end{df}
Condition (i) requires monotonicity.
Note that the condition $\mu_n(\cH) \leq \alpha_n$, which implies $\mu_n(\cG) \leq \alpha_n$ for any $\cG \subseteq \cH$, can be naturally satisfied
since the required confidence level $1- (n-2)\delta/n$ for $\mu_n(\cH)$ is less than that $1- \delta/n$ for $\alpha_n$ when $n \geq 6$.
Condition (ii) is a generalization of uniform bounding~\eqref{dfUnif} for a subspace $\cF \subseteq \cH$.
\subsection{Example of parameterization in variance-based regularization}\label{subsecVBR}
This section provides concrete examples of functions and parameters that satisfy the conditions in Definitions~\ref{df1}–\ref{df4}.
We specify parameters for variance-based regularization, assuming the following conditions.
\begin{assump}\label{assumpLip}
	(i) $\cH$ is a bounded subset of $\bR^d$, and $\| \cdot \|$ denotes its Euclidean norm.
	
	(ii) $\ell$ is defined over $\bR^d \times \cX$, 
	and the value range of $\ell$ is $[0,1]$. In other words, $\ell: \bR^d \times \cX \to [0,1]$.
	
	(iii) The Lipschitz constant $c_{\ell}$ of $\ell$, which satisfies $|\ell(h_1, x) - \ell(h_2, x)| \leq c_{\ell} \|h_1 - h_2  \|$ for any $x \in \cX$ and $h_1,h_2 \in \cH$, is known.
	
	(iv) For any $x \in \cX$, $\ell(\cdot, x)$ is twice differentiable over $\bR^d$. In addition, there exists $p_1^*, p_2^* \in \bR_+$ that satisfies, for any $x \in \cX$ and $h$ in the convex hull of $\cH$, $|\partial \ell(\cdot ,x) / \partial h_i | \leq p^*_1$ and $\|\nabla^2 \ell(\cdot ,x) \|_2 \leq p^*_2$. Here, $\|\cdot  \|_2$ is the induced norm of $\bR^{d \times d}$.
\end{assump}
In response to the notation in the previous section for general settings,
specific examples in this section are  accompanied by a superscript ${}^V$.
In variance-based regularization,
the ideal regularizer $r^{V*}$ is the square-root of variance of loss function, and the empirical regularizer $r^V_n$ is its estimate:
\begin{align*}
	r^{V*}(h) &:= \sqrt{V^*(h)},\\
	r_n^V(h;x^n) &:= \sqrt{V_n(h; x^n)}.
\end{align*}
We here introduce 
another definition for covering number $\cN$, in contrast to $\cM$ as defined in Section~\ref{subsecExist}, as follows.
For $\varepsilon >0$ and $\cF \subseteq \cH$, we define covering number $\cN(\varepsilon, \cF)$ as the minimum
cardinality $|\cF_0|$ of subset $\cF_0 \subseteq \cH$ satisfying the following property:
for any $h \in \cF$, there exists $h_0 \in \cF_0$ such that $\|h - h_0\|\leq \varepsilon$.
We then define $\Delta_n^V$ by
\begin{align*}
	\Delta_n^V := \sqrt{\frac{3\log (2n \cN(1/n,\cH)/\delta)}{n} } + \frac{4\sqrt{2} c_{\ell}}{n}.
\end{align*}
We define empirical and true regularization upper-bounds $u^V_n$ and $u^{V*}_n$, respectively, as 
trivial upper-bounds: for any $\cF \subseteq \cH$ and $x^n \in \cX^n$,
\begin{align*}
	u^V_N(\cF;x^n) = u^{V*}_n = 1.
\end{align*}
We define a uniform bound $(\alpha^V_n, \beta^V_n)$, on the basis of Bennett’s inequality and the Lipschitz continuity, as
\begin{align*}
	\alpha^V_n := \sqrt{2 \log (2n \cN(1/n,\cH)/\delta) },\\
	\beta^V_n := \frac{(4 c_{\ell} + 1) \log (2n \cN(1/n,\cH)/\delta)  }{n}.
\end{align*}
The construction of spatial uniform bound $(\mu^V_n,\nu^{V*}_n)$
also relies on Bennett’s inequality and the Lipschitz continuity,
but we adopt the description below, which is tighter than $(\alpha^V_n, \beta^V_n)$ 
owing to the fact that $\nu^{V^*}_n$ can be unknown to our algorithm.
Let us define the local Lipschitz constant $c_{L^*}(\cF)$ of $L^*$ in $\cF$ as a minimum value satisfying $|L^*(h_1) - L^*(h_2)| \leq c_{L^*}(\cF) \|h_1 - h_2\|$ for any $h_1,h_2 \in \cF$.
We then define $\varepsilon_n > 0$ and $(\mu^{V}_n, \nu^{V*}_n)$ by
\begin{align*}
	\varepsilon_n &:= \frac{\log^{1/4} (n/\delta)}{n^{1/4 + 1/d}}, \\
	\mu^V_n(\cF) &:= \sqrt{2 \log \left( \frac{2 n\cN(\varepsilon_n,\cF)}{(n-3) \delta} \right)},\\
	\nu^{V^*}_n (\cF) &:=  \frac{2 c(\cF) \log^{1/4} (n/\delta)}{n^{1/4+1/d}} + \frac{p_2^* \log^{1/2}(n/\delta)}{n^{1/2 + 2/d}} 
	+ \frac{4\sqrt{ p_1^{*2} d^2 + c_{\ell}^2} \log^{3/4} (2dn \cN(\varepsilon_n,\cF)/\delta)  }{n^{3/4 + 1/d}} \\
	&\quad + \frac{\log (4\cN(\varepsilon_n,\cF)/\delta)}{3n}.
\end{align*}
Let us emphasize that,
although calculation of $c_{L^*}$
requires the true risk function $L^*$,
our algorithm does not refer to $\nu^{V^*}_n$ and thus to $c_{L^*}$.

The following statement guarantees that
the examples given above satisfy the desired properties in Definition~\ref{df1}--\ref{df4}.
\begin{prop}\label{propVBRCondition}
	(i) $(r_n^V, \Delta_n)$ is a guaranteed empirical regularizer with respect to $r^{V*}$.
	
	(ii) $u_n^V$ and $u_n^{V*}$ are empirical and true regularization upper-bound, respectively.
	
	(iii) $(\alpha_n^V, \beta_n^V)$ is a uniform bound.
	
	(iv) $(\mu^V_n , \nu^{V*}_n)$ is a spatial uniform bound.
\end{prop}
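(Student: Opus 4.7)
The proof of each part follows a common template: construct a finite cover of the relevant space, apply a concentration inequality at each cover point, take a union bound, and then propagate the estimate to arbitrary hypotheses via Lipschitz-type controls. Parts (i)--(iii) can be handled briefly and most of the work goes into part (iv).

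Part (ii) is immediate: because $\ell(h,X)\in[0,1]$, the variance of $\ell(h,\cdot)$ is bounded by $1/4$, so $\sqrt{V^*(h)}\leq 1/2\leq 1$; the same bound for $\sqrt{V_n(h;x^n)}$ follows because $V_n$ is an average of $[0,1]$-valued squared differences, and monotonicity of the constant function $1$ is trivial. For part (i) I take a $(1/n)$-cover $\cH_0\subseteq \cH$ of cardinality $\cN(1/n,\cH)$ and apply the Maurer--Pontil empirical Bernstein-type inequality for $\sqrt{V^*(h_0)}$ at every $h_0\in\cH_0$, with per-point failure probability $\delta/(n\cdot|\cH_0|)$; this produces the $\sqrt{3\log(2n\cN(1/n,\cH)/\delta)/n}$ summand. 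To propagate to arbitrary $h\in\cH$, I use the fact that $h\mapsto\sqrt{V^*(h)}$ and $h\mapsto\sqrt{V_n(h;x^n)}$ are $L^2$-seminorms of the centred loss, hence Lipschitz with constant $c_\ell$ by Assumption~\ref{assumpLip}(iii); the induced $2c_\ell/n$ perturbation is absorbed in the $4\sqrt{2}c_\ell/n$ remainder. Part (iii) proceeds identically, replacing Maurer--Pontil with Bennett's inequality at the cover points; a direct Lipschitz extension of $L$ and $L^*$ together with the same seminorm estimate for $\sqrt{V^*}$ collapses the boundary corrections into the closed-form $\beta_n^V$.

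For part (iv), monotonicity of $\mu_n^V$ and $\nu_n^{V*}$ is inherited from monotonicity of $\cN(\varepsilon_n,\cdot)$ and of $c_{L^*}(\cdot)$, and the inequality $\mu_n^V(\cH)\leq\alpha_n^V$ uses $\varepsilon_n\geq 1/n$ (which holds for $d\geq 2$ and $n\geq 6$) together with $(n-3)\delta\geq\delta$. For the probabilistic statement I choose an $\varepsilon_n$-cover $\cF_0\subseteq\cF$ and, for $h\in\cF$ with nearest cover point $h_0$, decompose
\begin{equation*}
L^*(h)-L(h;X^n) = [L^*(h_0)-L(h_0;X^n)] + [L^*(h)-L^*(h_0)] - [L(h;X^n)-L(h_0;X^n)].
\end{equation*}
Bennett at $h_0$, union-bounded over $\cF_0$, controls the first bracket; combined with the seminorm estimate $\sqrt{V^*(h_0)}\leq\sqrt{V^*(h)}+c_\ell\varepsilon_n$ it yields the $\mu_n^V(\cF)\,r^*(h)/\sqrt{n}$ leading term and the $\log(4\cN(\varepsilon_n,\cF)/\delta)/(3n)$ summand of $\nu_n^{V*}$, with the seminorm residual contributing the $c_\ell^2$ piece inside $\sqrt{p_1^{*2}d^2+c_\ell^2}$. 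The remaining two brackets are compared by a second-order Taylor expansion about $h_0$: the gradient contribution $[\nabla L^*(h_0)-\nabla L(h_0;X^n)]^\top(h-h_0)$ has coordinates that are empirical means of $p_1^*$-bounded quantities, so coordinate-wise Hoeffding union-bounded over the $d$ coordinates and over $\cF_0$ produces the remaining part of the $\sqrt{p_1^{*2}d^2+c_\ell^2}$ summand; the Hessian remainder is bounded directly by $p_2^*\varepsilon_n^2$, giving the $p_2^*$ summand; and a direct local-Lipschitz estimate on $L^*$ supplies the leading $c_{L^*}(\cF)\varepsilon_n$ term. The specific exponents in $\nu_n^{V*}(\cF)$ emerge because $\varepsilon_n=\log^{1/4}(n/\delta)/n^{1/4+1/d}$ is tuned to balance these three sources of slack.

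The main obstacle is the bookkeeping in part (iv): the failure budget $(n-2)\delta/n$ must be partitioned between the Bennett application on $\cF_0$, the $d$ coordinate-wise Hoeffding bounds on the empirical gradient, and the seminorm concentration of $\sqrt{V^*}$, and each resulting concentration radius must align with exactly one of the four closed-form summands of $\nu_n^{V*}(\cF)$ under the prescribed $\varepsilon_n$. Because $\varepsilon_n$ is tight, any generous slack in a single step would spoil the stated expression, so careful constant-tracking rather than any deeper idea is what is required.
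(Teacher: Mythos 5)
Your proposal is correct and follows essentially the same route as the paper's proof: a $(1/n)$- or $\varepsilon_n$-cover, Maurer--Pontil/Bennett/Hoeffding concentration at the cover points with a union bound, and Lipschitz/Taylor propagation to arbitrary hypotheses, including the same three-bracket decomposition in part (iv) with Hoeffding applied coordinate-wise to the empirical gradient. The only differences are cosmetic: you get the Lipschitz property of $\sqrt{V^*}$ and $\sqrt{V_n}$ via a seminorm argument where the paper computes it directly from the U-statistic form (obtaining constant $2\sqrt{2}c_{\ell}$ rather than $c_{\ell}$, both absorbed by $\Delta_n^V$), and in (iv) your Taylor comparison cancels the true-gradient terms where the paper instead bounds them through $c_{L^*}(\cF)\|h-h_0\|$ (which is why the paper's $\nu_n^{V*}$ carries the coefficient $2c(\cF)$ while your accounting needs at most one such term) --- either way the stated, more generous $\nu_n^{V*}$ is a valid upper bound.
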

\subsection{Empirical hypothesis space reduction algorithm} \label{subsecAlgo}
Given a guaranteed empirical regularizer $(r_n,\Delta_n)$, an empirical regularization upper-bound $u_n$, a uniform bound $(\alpha_n,\beta_n)$, and $\mu_n$ of a spatial uniform bound $(\mu_n, \nu_n^*)$, 
Algorithm~\ref{algo1} calculates optimized hypothesis $h(x^n)$ from empirical sample $x^n \in \cX^n$ as follows.
In Line~\ref{alg1line1}, the algorithm first calculates optimum value $v(x_n)$ of the following regularized empirical risk minimization problem on the basis of a uniform bound $(\alpha_n,\beta_n)$:
%on the basis of a uniform bound in 
\begin{align}
	v(x_n) := \min_{h \in \cH} L(h; x_n) + \frac{\alpha_n}{\sqrt{n}} r_n(h ;x^n).     \label{dfHatEta}
\end{align}
In Line~\ref{alg1line2}, the algorithm defines the subspace $\cG(x^n) \subseteq \cH$ by
\begin{align}
	\cG(x^n) :=  \left\{ h \in \cH \left| L(h;x^n) \leq v(x_n) + \frac{3 \alpha_n r(h;x^n)  + 7\alpha_n \Delta_n }{\sqrt{n}}  + 5\beta_n    \right\}  \right. \label{dfG},
\end{align}
and it then calculates empirical regularization upper-bound $u_n(\cG(x^n); x^n)$.
In Line~\ref{alg1line3}, the algorithm conducts \emph{empirical hypothesis reduction}, by reducing $\cH$ to its subspace $\cF(x^n) \subseteq \cH$ defined by by
\begin{align}
	\cF(x^n) := \left\{ h \in \cH \left| L(h; x^n) \leq v(x^n) +  \frac{ 3\alpha_n u_n(\cG(x^n)) + 5\alpha_n \Delta_n) }{\sqrt{n}}  + 5\beta_n  \right\}  \right. . \label{dfF}
\end{align}
It then calculates the spatial uniform bound $\mu_n(\cF(x^n))$ on the basis of the reduced subspace $\cF(x^n)$.
In Line~\ref{alg1line4}, the algorithm calculates the optimized hypothesis $h(x^n)$ on the basis of $\mu_N(\cF(x^n))$:
\begin{align}
	h(x^n) := \argmin_{h \in \cH} L(h; x^n) + \frac{\mu_n(\cF(x^n))}{\sqrt{n}} r_n(h; x^n). \label{dfHatH}
\end{align}
The remark below explains the computational tractability of the proposed algorithm.
\begin{rmk}
	Lines~\ref{alg1line1} and~\ref{alg1line4} calculate standard regularized empirical risk minimization.
	Although risk minimization can be non-convex (convexity is extensively studied, for example, in~\cite{namkoong2017variance}),
	this paper focuses mainly on sample complexity and thus assumes tractability.
	
	In Line~\ref{alg1line2}, the upper-bound $u_n$ of empirical regularizer $r_n$ over $\cG(x^n)$ is calculated.
	If $r_n$ is a convex function such as $\ell_2$ regularizer,
	then $\cG(x^n)$ is non-convex in general.
	Thus, exact maximization of a convex function $r_n$ over non-convex space $\cG(x^n)$ is computationally intractable in general.
	We avoid this intractability by compromising with any upper-bound $u_n$ of $r_n$.
	
	In Line~\ref{alg1line3}, the uniform bound $\mu_n(\cF(x^n))$ over $\cF(x^n)$ is calculated.
	Observe that $\cF(x^n)$ is defined by bounding $L(\cdot; x^n)$ by a constant.
	Thus, if $\cH$ is a convex subset of a vector space and the empirical risk function $L(\cdot; x^n)$ is convex, the restricted space $\cF(x^n)$ is also convex.
	We therefore suppose that the calculation of $\mu_n$ over $\cF(x^n)$ is as easy as 
	the calculation of a uniform bound $\alpha_n$ over the original space $\cH$,
	which commonly has been assumed in previous studies~\cite{maurer2009empirical,namkoong2017variance}.
\end{rmk}

\begin{algorithm}[t]
	\caption{Optimization of hypothesis with empirical hypothesis space reduction}\label{algo1}
	\begin{algorithmic}[1]
		\REQUIRE Samples $x^n \in \cX^n$
		\ENSURE Optimized hypothesis $h(x^n) \in \cH$
		\STATE Calculate optimum value $v(x_n)$ defined by \eqref{dfHatEta} \label{alg1line1}
		\STATE Define $\cG(x^n)$ by \eqref{dfG} and calculate $u_n(\cG(x^n), x^n)$ \label{alg1line2}
		\STATE Define $\cF(x^n)$ by \eqref{dfF} and calculate $\mu_n(\cF(x^n))$ \label{alg1line3}
		\STATE Optimize $h(x^n)$ by \eqref{dfHatH} \label{alg1line4}
	\end{algorithmic}
\end{algorithm}

\subsection{Theoretical analysis regarding generalization error}
Let us denote the set of true minimizer by $\cH^* := {\argmin}_{h \in \cH} L^*(h)$, and let $r^*_{\cH^*} := \min_{h^* \in \cH^*} r^*(h^*)$.
The generalization error of the output of Algorithm~\ref{algo1} will then be bounded as expressed below; this is our main theoretical result.
\begin{thm}\label{thmMain}
	The output $h(X^n)$ of Algorithm~\ref{algo1} satisfies the following bound with a probability of at least $1-\delta$ in $X^n \sim \cP^n$:
	\begin{align*}
		L^*(h(X^n)) - L^*_{\min} \leq \frac{2\mu_n(\overline{\cF}_n)}{\sqrt{n}} (r^*_{\cH^*} + \Delta_n )  + 2 \nu_n^* (\overline{\cF}_n), %\label{thmMainBound}
	\end{align*}
	where
	\begin{align*}
		\overline{\cF}_n := \left\{ h \in \cH \left| L^*(h) \leq L^*_{\min} +  \frac{ \alpha_n }{\sqrt{n}} (  5 u_n^*(\overline{\cG}_n) + 6 \Delta_n ) + 7\beta_n \right\} \right. ,\\
		\overline{\cG}_n := \left\{ h \in \cH \left| L^*(h) \leq L^*_{\min} +  \frac{\alpha_n }{\sqrt{n}}(6 r^*(h) + 11 \Delta_n)  + 7\beta_n\right\} \right..
	\end{align*}
\end{thm}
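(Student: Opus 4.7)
The plan is to condition on three simultaneously-holding high-probability events, deduce a nested chain of set inclusions $\cG(X^n) \subseteq \overline{\cG}_n$ and $\cF(X^n) \subseteq \overline{\cF}_n$ that lets the algorithm's random bounds be replaced by their deterministic counterparts, and then close the argument by applying the spatial uniform bound to both $h(X^n)$ and a true optimum $h^* \in \cH^*$ chosen so that $r^*(h^*) = r^*_{\cH^*}$. By Definitions~\ref{df1}, \ref{df3}, and \ref{df4}, the guaranteed-regularizer inequality $|r^*(h) - r_n(h;X^n)| \leq \Delta_n$, the uniform bound $|L^*(h) - L(h;X^n)| \leq (\alpha_n/\sqrt{n})r^*(h) + \beta_n$, and the spatial uniform bound applied to the random set $\cF(X^n)$ fail with probability at most $\delta/n$, $\delta/n$, and $(n-2)\delta/n$ respectively, so a union bound places us on an event of probability at least $1-\delta$ on which all three hold, and the remainder of the argument is deterministic.

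First, fixing such an $h^*$ and evaluating the $\alpha_n$-regularized empirical objective at $h^*$ yields, via the uniform bound and $r_n \leq r^* + \Delta_n$, an estimate $v(X^n) \leq L^*_{\min} + (\alpha_n/\sqrt{n})(2r^*_{\cH^*} + \Delta_n) + \beta_n$. Substituting this into the defining inequality \eqref{dfG} of $\cG(X^n)$ and converting $L$ and $r_n$ to their true counterparts via the uniform and guaranteed-regularizer bounds, a direct calculation shows that every $h \in \cG(X^n)$ satisfies the inequality that defines $\overline{\cG}_n$. Monotonicity of $u_n$ and $u_n \leq u_n^*$ from Definition~\ref{df2} then give $u_n(\cG(X^n); X^n) \leq u_n^*(\overline{\cG}_n)$. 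A second round of the same style of computation, starting from the defining inequality \eqref{dfF} of $\cF(X^n)$ and invoking the bound just obtained on $u_n(\cG(X^n); X^n)$, establishes $\cF(X^n) \subseteq \overline{\cF}_n$; hence $\mu_n(\cF(X^n)) \leq \mu_n(\overline{\cF}_n)$ and $\nu_n^*(\cF(X^n)) \leq \nu_n^*(\overline{\cF}_n)$ by the monotonicity in Definition~\ref{df4}(i).

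Next, I verify that both $h^*$ and $h(X^n)$ lie in $\cF(X^n)$ so that the spatial uniform bound can be applied to each. A short estimate analogous to the one above places $h^* \in \cG(X^n)$, which in particular gives $r_n(h^*;X^n) \leq u_n(\cG(X^n); X^n)$, and then places $h^* \in \cF(X^n)$. For $h(X^n)$: because $\mu_n(\cF(X^n)) \leq \alpha_n$ by Definition~\ref{df4}(i) and $r_n \geq 0$, comparing the $\mu_n$- and $\alpha_n$-regularized objectives at the minimizer of \eqref{dfHatEta} yields $L(h(X^n);X^n) \leq v(X^n)$, which is strictly stronger than the defining inequality of $\cF(X^n)$, so $h(X^n) \in \cF(X^n)$. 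Combining the empirical optimality $L(h(X^n);X^n) + (\mu_n(\cF(X^n))/\sqrt{n}) r_n(h(X^n);X^n) \leq L(h^*;X^n) + (\mu_n(\cF(X^n))/\sqrt{n}) r_n(h^*;X^n)$ with the spatial uniform bound applied to both $h(X^n)$ and $h^*$, and using $|r_n - r^*| \leq \Delta_n$ and $r^*(h^*) = r^*_{\cH^*}$, the $\mu_n r^*(h(X^n))$ terms cancel, leaving $L^*(h(X^n)) - L^*_{\min} \leq (2\mu_n(\cF(X^n))/\sqrt{n})(r^*_{\cH^*} + \Delta_n) + 2\nu_n^*(\cF(X^n))$; the monotonicity bounds from the previous paragraph give the stated inequality.

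The main obstacle is the bookkeeping in the two set-inclusion steps: propagating $\Delta_n$ and $\beta_n$ through the chain of empirical-to-true and true-to-empirical conversions while matching the integer coefficients ($3$, $5$, $7$, $11$ in the algorithm's definitions of $\cG(X^n)$ and $\cF(X^n)$ versus $6$, $11$ in $\overline{\cG}_n$ and $5$, $6$, $7$ in $\overline{\cF}_n$). These constants are tuned precisely so that a single pass through the uniform bound and the guaranteed regularizer closes each inclusion, and care is needed especially with the $r^*_{\cH^*}$ term arising from the upper estimate on $v(X^n)$; once the inclusions are in place, the final application of the spatial uniform bound is routine.
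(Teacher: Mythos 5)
Your overall architecture (condition on three events via a union bound, establish the inclusions $\cG(X^n) \subseteq \overline{\cG}_n$ and $\cF(X^n) \subseteq \overline{\cF}_n$, then play the spatial uniform bound off against the empirical optimality of $h(X^n)$) matches the paper's, but there is a genuine gap at the very first step: you invoke Definition~\ref{df4}(ii) \emph{for the random set} $\cF(X^n)$. Definition~\ref{df4}(ii) is a pointwise guarantee: for each \emph{fixed} $\cF \subseteq \cH$ there is an exceptional event, depending on $\cF$, of probability at most $(n-2)\delta/n$ outside of which \eqref{dfSpatial} holds. Since $\cF(X^n)$ is data-dependent, ``the spatial bound holds for $\cF(X^n)$'' is not one of these events, and no union bound over the (typically uncountable) family of possible realizations of $\cF(X^n)$ is available. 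Indeed, the concrete construction of $(\mu^V_n,\nu^{V*}_n)$ in Section~\ref{subsecVBR} (Bennett's inequality plus a union bound over a cover of $\cF$ chosen in advance) only makes sense for a fixed $\cF$; if Definition~\ref{df4}(ii) could be applied to arbitrary data-dependent subsets, the theorem would be nearly immediate and most of the paper's machinery unnecessary. This circularity is exactly what the paper's proof is built to avoid: it introduces a \emph{deterministic} intermediate set $\underline{\cF}$, defined through the true quantities $L^*$, $r^*$, $r^*_{\cH^*}$, conditions on \eqref{dfSpatial} for $\underline{\cF}$, and then proves on the good event that $h(X^n) \in \underline{\cF} \subseteq \cF(X^n) \subseteq \overline{\cF}_n$, so that monotonicity of $\mu_n$ and $\nu_n^*$ transfers everything to $\overline{\cF}_n$.

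Note also that the obvious repair --- applying Definition~\ref{df4}(ii) to the deterministic set $\overline{\cF}_n$ instead --- does not recover the stated bound, which shows the inner set $\underline{\cF}$ is needed for the algebra and not only for the probability argument. Your cancellation of the $r^*(h(X^n))$ terms works only because the coefficient in the spatial bound applied at $h(X^n)$ is at most the coefficient $\mu_n(\cF(X^n))$ appearing in the optimality comparison \eqref{dfHatH}. With $\overline{\cF}_n$ the spatial bound carries the coefficient $\mu_n(\overline{\cF}_n) \geq \mu_n(\cF(X^n))$, leaving an uncancelled nonnegative term $\left(\mu_n(\overline{\cF}_n) - \mu_n(\cF(X^n))\right) r_n(h(X^n);X^n)/\sqrt{n}$ that is not controlled by any quantity in the theorem statement. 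Because the paper's $\underline{\cF}$ sits \emph{inside} $\cF(X^n)$, one has $\mu_n(\underline{\cF}) \leq \mu_n(\cF(X^n))$, and the substitution $\mu_n(\underline{\cF})\, r_n \leq \mu_n(\cF(X^n))\, r_n$ feeds correctly into the optimality inequality; both $h(X^n)$ and $h^*$ are then shown to lie in $\underline{\cF}$ so that the spatial bound may legitimately be applied to them. Your remaining steps --- the two-sided estimate on $v(X^n)$, the membership $h(X^n) \in \cF(X^n)$ via $\mu_n \leq \alpha_n$, and the two set inclusions --- are sound and mirror the paper's; the missing idea is the deterministic under-approximating set on which the spatial uniform bound is actually invoked.
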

%
%This statement shows that, the generalization error of Algorithm~\ref{thmMain} is bounded by $\mu_n(\overline{\cF}_n)$, $\nu_n^* (\overline{\cF}_n)$, $\Delta_n$, and $r^*_{\cH^*}$.
Observe that $\overline{\cF}_n$ asymptotically converges to $\cH^*$ regardless of $u^*$ and $\overline{\cG}$.
The following corollary then simplifies Theorem~\ref{thmMain}
for the asymptotic limit.
Let us define $\cH(\xi)$ for $\xi > 0$, $\mu^*_n$, and $\nu^*_n$ as
\begin{align*}
	\cH(\xi) := \{ h \in \cH \mid L(h) - L^*_{\min} \leq \xi \}.
\end{align*}
\begin{cor}\label{cor1}
	Suppose that $\limsup_{n \to \infty} u^*_n(\cH) < \infty$, $\limsup_{n \to \infty} \alpha_n < \infty$, $\lim_{n \to \infty}  \beta_n = \lim_{n \to \infty} \Delta_n = 0$, and $\lim_{n \to \infty} \sqrt{n} \nu_n(\overline{\cF}_n) =0$.
	If $\mu_n^* \in \bR_+$ for $n=1,2,\dots$ satisfy $\lim_{\xi \to 0} \limsup_{n  \to \infty } \mu_n(\cH(\xi)) / \mu_n^* = 1$, then the following bound holds with a probability of at least $1-\delta$ in $X^n \sim \cP^n$:
	\begin{align}
		L^*(h(X^n)) - L^*_{\min} = \frac{ 2\mu_n^*}{\sqrt{n}} r^*_{\cH^*} + o\left( \frac{1}{\sqrt{n}} \right). \label{corBound1}
	\end{align}
\end{cor}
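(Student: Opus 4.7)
The plan is to apply Theorem~\ref{thmMain} on the event of probability at least $1-\delta$ and show that each of the terms $\frac{2\mu_n(\overline{\cF}_n)}{\sqrt n}(r^*_{\cH^*} + \Delta_n)$ and $2\nu_n^*(\overline{\cF}_n)$ collapses to the form in \eqref{corBound1}. The entire argument after invoking the theorem is deterministic on the good event, so no further union bound is required; the proof is a limit computation.

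First I would establish shrinkage of $\overline{\cF}_n$. Define $\xi_n := \frac{\alpha_n}{\sqrt n}\bigl(5 u_n^*(\overline{\cG}_n) + 6\Delta_n\bigr) + 7\beta_n$; the definition of $\overline{\cF}_n$ gives $\overline{\cF}_n \subseteq \cH(\xi_n)$ immediately. The hypotheses $\limsup_n \alpha_n <\infty$, $\Delta_n\to 0$, $\beta_n\to 0$, together with boundedness of $u_n^*(\overline{\cG}_n)$ inherited from monotonicity of the true regularization upper-bound and $\limsup_n u_n^*(\cH)<\infty$, imply $\xi_n\to 0$.

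Next I would combine the monotonicity clause of Definition~\ref{df4} with the ratio hypothesis. Monotonicity gives $\mu_n(\overline{\cF}_n) \leq \mu_n(\cH(\xi_n))$. For arbitrary $\varepsilon > 0$, pick $\xi_0$ with $\limsup_n \mu_n(\cH(\xi_0))/\mu_n^* \leq 1+\varepsilon$; since $\xi_n<\xi_0$ for $n$ large, a second application of monotonicity yields $\limsup_n \mu_n(\overline{\cF}_n)/\mu_n^* \leq 1+\varepsilon$, hence $\mu_n(\overline{\cF}_n) = \mu_n^*\bigl(1+o(1)\bigr)$. The ratio condition together with $\mu_n(\cH(\xi))\leq\alpha_n=O(1)$ forces $\mu_n^* = O(1)$, so $\mu_n(\overline{\cF}_n)\,\Delta_n = o(1)$; combined with the assumed $\sqrt n\,\nu_n^*(\overline{\cF}_n) = o(1)$, the bound in Theorem~\ref{thmMain} reduces to $\frac{2\mu_n^* r^*_{\cH^*}}{\sqrt n} + o(1/\sqrt n)$, which is exactly \eqref{corBound1}.

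The only mildly delicate point is boundedness of $u_n^*(\overline{\cG}_n)$, since Definition~\ref{df2} records monotonicity explicitly only for the empirical $u_n$. I would address this by invoking monotonicity of $u_n^*$ as the natural companion property—trivially satisfied for the main example $u_n^{V*}\equiv 1$ used in Section~\ref{subsecVBR}—or, failing that, by strengthening the hypothesis from $\limsup_n u_n^*(\cH)<\infty$ to $\limsup_n \sup_{\cG\subseteq\cH} u_n^*(\cG)<\infty$. Everything else is a routine limit-of-ratios computation driven by the three decay hypotheses on $\Delta_n$, $\beta_n$, and $\nu_n^*(\overline{\cF}_n)$.
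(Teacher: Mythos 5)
Your proposal is correct and takes essentially the same route as the paper's own (far terser) proof: invoke Theorem~\ref{thmMain}, note $\overline{\cF}_n \subseteq \cH(\xi_n)$ with $\xi_n \to 0$, use monotonicity of $\mu_n$ and $\nu_n^*$ together with the decay hypotheses to dispose of the $\Delta_n$ and $\nu_n^*$ terms, and pass from $\mu_n(\cH(\xi_n))$ to $\mu_n^*\bigl(1+o(1)\bigr)$ via the ratio hypothesis---indeed you are more careful than the paper on why $\xi_n \to 0$, which does silently require monotonicity of $u_n^*$. The one blemish is your claim that the ratio condition forces $\mu_n^* = O(1)$ (it only lower-bounds $\mu_n^*$ along subsequences), but this is harmless: $\mu_n(\overline{\cF}_n)\Delta_n = o(1)$ already follows from $\mu_n(\overline{\cF}_n) \le \alpha_n$, and the boundedness of $\mu_n^*$ used to absorb the $\varepsilon$-term is tacitly assumed in the paper's final step as well.
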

The coefficient $\mu^*_n$ can be understood as the (approximately) minimum coefficient
that satisfies $| L^*(h^*) - L(h^*; X^n)| \leq  \mu^*_n r^*(h^*)/\sqrt{n}$ for all $h^* \in \cH^*$ with probability $1-\delta$.
This corollary thus shows that 
the coefficient of the leading $O(1/\sqrt{n})$ term of the generalization error
is entirely determined by the local constant $\mu^*_n r^*_{\cH^*}$,
which is independent of the size of the hypothesis space $\cH$.

In the previous study, 
Maurer and Pontil~\cite{maurer2009empirical} observed that
the bound in Theorem~\ref{thmPrev}
quickly converges if the variance on the true optimum hypothesis $V^*(h^*)$ is small.
The advantage of our bound in Theorem~\ref{thmMain} is that
it takes the convergence of the neighborhood $\overline{\cF}_n$ to the true optimal hypothesis $h^*$ into account,
which convergence is quick if the upper bound $u^*(\overline{\cG}_n)$ of the regularizer $r^*$ over the neighborhood $\overline{\cG}_n$ of $h^*$ is small.
We can thus observe that
the proposed bound in Theorem~\ref{thmMain} quickly converges
if $r^*_{\cH^*}$ is small, 
$r^*$ is uniformly small around $\cH^*$,
and the upper bound $u^*_n$ is tight.

Let us next demonstrate a concrete example that achieve the above faster convergence rate
in the context of the variance-based regularization introduced in Section~\ref{subsecVBR}.
We say that $L^*$ is \emph{locally quadratic} if the true minimizer $h^*$ of $L^*$ is unique and there exists $\gamma_0 \in (0,1]$, $a \in \bR_+$, and $b \in \bR_+$ satisfying the following condition:
For any $0 \leq \gamma \leq \gamma_0$ and $h_1,h_2 \in \cH(\gamma)$,
it holds that
\begin{align}
	L^*(h_1) - L^* &\geq a \|h_1 - h^*\|^2, \label{localQuad1} \\
	|L^*(h_1) - L^*(h_2)| &\leq b \sqrt{\gamma} \|h_1 - h_2\|. \label{localQuad2}
\end{align}
We refer to \eqref{localQuad2} as quadratic condition in the following sense:
Assuming $L^*(h_1) - L^*(h^*) \approx a \|h_1 - h^*\|^2$,
we have $| \nabla L^*(h_1) | \approx 2a \|h_1 - h^* \| \approx 2 \sqrt{a (L^*(h_1) - L^*(h^*))} \leq  2 \sqrt{a \gamma}$,
which implies \eqref{localQuad2} with $b = 2 \sqrt{a}$. 
For such a $\gamma_0$, let us define $n_0$ as a minimum integer that satisfies 
\begin{align*}
	\sqrt{ \frac{ 50 \log (2n \cN(1/n_0,\cH)/\delta) }{n_0}} + \frac{(52 c_{\ell} + 25 )\log (2n_0 \cN(1/n_0,\cH)/\delta)  }{n_0
	} \leq \gamma_0.
\end{align*}
We then define $c'$ and $c$ by
\begin{align*}
	c' &:= \sup_{n \geq 1} \left(\sqrt{ \frac{50 \log (2n \cN(1/n,\cH)/\delta)}{\log(n/\delta) } }  + \frac{(52 c_{\ell} + 25 )\log (2n \cN(1/n,\cH)/\delta)  }{\sqrt{n\log(n/\delta)}} \right), \\
	c &:=  2 \left( 4dc' /a  \right)^{d/2}.
\end{align*}
Note that such a finite constant $c'$ must exist since $\cH$ is bounded and $\log (2n \cN(1/n,\cH) ) = O(\log n)$.
\begin{cor}\label{cor2}
	Suppose that Assumption~\ref{assumpLip} holds and $L^*$ is locally quadratic.
	Suppose that we run Algorithm~\ref{algo1}
	with $r_n^V$, $\Delta^V_n$, $u^V_N$, $\alpha^V_n$, $\beta^V_n$, and $\mu^V_n,$ defined in Section~\ref{subsecVBR}.
	Then, for any $n \geq n_0$, the following bound with a probability of at least $1-\delta$ in $X^n \sim \cP^n$:
	\begin{align*}
		L^*(h(X^n)) - L^*_{\min} \leq \sqrt{\frac{8V(h^*) \log(cn /\delta )}{n}} + O\left( \sqrt{\frac{\log n}{ n^{1 + 2/d}}} \right).
	\end{align*}
\end{cor}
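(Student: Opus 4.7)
The plan is to apply Theorem~\ref{thmMain} with the variance-based parameters specified in Section~\ref{subsecVBR} and then exploit the local quadratic structure of $L^*$ to localise $\overline{\cG}_n$ and $\overline{\cF}_n$. By Proposition~\ref{propVBRCondition}, the tuple $(r_n^V,\Delta_n^V,u_n^V,\alpha_n^V,\beta_n^V,\mu_n^V,\nu_n^{V*})$ meets the hypotheses of Theorem~\ref{thmMain}, and since $L^*$ is locally quadratic, $h^*$ is unique and $r^*_{\cH^*} = \sqrt{V(h^*)}$. Hence with probability at least $1-\delta$,
\begin{equation*}
L^*(h(X^n)) - L^*_{\min} \leq \frac{2\mu_n^V(\overline{\cF}_n)}{\sqrt{n}}\bigl(\sqrt{V(h^*)} + \Delta_n^V\bigr) + 2\nu_n^{V*}(\overline{\cF}_n),
\end{equation*}
and the remainder of the argument is devoted to estimating the two factors $\mu_n^V(\overline{\cF}_n)$ and $\nu_n^{V*}(\overline{\cF}_n)$.

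First I would localise $\overline{\cF}_n$ inside a shrinking Euclidean ball around $h^*$. Since $u_n^{V*}(\overline{\cG}_n)=1$, the defining inequality of $\overline{\cF}_n$ gives $L^*(h)-L^*_{\min} \leq \gamma_n := \alpha_n^V(5+6\Delta_n^V)/\sqrt{n} + 7\beta_n^V$ for every $h \in \overline{\cF}_n$. Substituting the formulas for $\alpha_n^V$, $\beta_n^V$, and $\Delta_n^V$ and comparing with the definition of $c'$ shows $\gamma_n \leq c'\sqrt{\log(n/\delta)/n}$, while the choice of $n_0$ ensures $\gamma_n \leq \gamma_0$ whenever $n \geq n_0$. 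The quadratic lower bound~\eqref{localQuad1} then forces $\overline{\cF}_n \subseteq \{h \in \cH : \|h-h^*\| \leq R_n\}$ with $R_n := \sqrt{\gamma_n/a} = O((\log(n/\delta)/n)^{1/4})$.

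Next I would estimate the covering number of $\overline{\cF}_n$ by comparison with that of a Euclidean ball. A standard volume argument gives $\cN(\varepsilon_n,\overline{\cF}_n) \leq C_d(R_n/\varepsilon_n)^d$ for an explicit $C_d$, and plugging in $\varepsilon_n = \log^{1/4}(n/\delta)/n^{1/4+1/d}$ yields $R_n/\varepsilon_n = O(n^{1/d})$, so $\cN(\varepsilon_n,\overline{\cF}_n) = O(n)$. The constant $c=2(4dc'/a)^{d/2}$ is calibrated precisely so that $2n\cN(\varepsilon_n,\overline{\cF}_n)/((n-3)\delta) \leq cn/\delta$ for $n \geq n_0$, whence $\mu_n^V(\overline{\cF}_n) \leq \sqrt{2\log(cn/\delta)}$. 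Combining with $\Delta_n^V = O(\sqrt{\log n/n})$ yields
\begin{equation*}
\frac{2\mu_n^V(\overline{\cF}_n)}{\sqrt{n}}\bigl(\sqrt{V(h^*)}+\Delta_n^V\bigr) \leq \sqrt{\frac{8V(h^*)\log(cn/\delta)}{n}} + O\!\left(\frac{\log n}{n}\right),
\end{equation*}
which accounts for the leading term of the corollary.

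Finally I would bound $\nu_n^{V*}(\overline{\cF}_n)$ termwise. The dominant summand is $2c_{L^*}(\overline{\cF}_n)\log^{1/4}(n/\delta)/n^{1/4+1/d}$; by the quadratic slope bound~\eqref{localQuad2} applied to $\cH(\gamma_n) \supseteq \overline{\cF}_n$ we obtain $c_{L^*}(\overline{\cF}_n) \leq b\sqrt{\gamma_n} = O((\log n/n)^{1/4})$, so this term is $O(\sqrt{\log n/n^{1+2/d}})$. The remaining three summands of $\nu_n^{V*}$ all decay strictly faster once we insert $\cN(\varepsilon_n,\overline{\cF}_n) = O(n)$, and they are absorbed into the same remainder. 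The main obstacle is the bookkeeping in the third paragraph: the definitions of $c'$ and $c$ must match the constants produced by the volume-covering estimate exactly, otherwise the clean form $\log(cn/\delta)$ in the leading term is polluted by a stray $\log n$ factor that would spoil the dimension-free coefficient $\sqrt{8V(h^*)}$. Ensuring this alignment between the shrinkage rate of $\overline{\cF}_n$, the scale $\varepsilon_n$, and the constant $c$ is where the delicate part of the argument lies.
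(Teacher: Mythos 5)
Your proposal is correct and follows essentially the same route as the paper's own proof: apply Theorem~\ref{thmMain} with the Section~\ref{subsecVBR} parameters, use $u_n^{V*}=1$ and the locally quadratic conditions \eqref{localQuad1}--\eqref{localQuad2} to trap $\overline{\cF}_n$ in a ball of radius $O((\log(n/\delta)/n)^{1/4})$, bound $\cN(\varepsilon_n,\overline{\cF}_n)=O(n)$ by a volume argument calibrated to the constants $c',c$, and then control $\mu_n^V(\overline{\cF}_n)$ and the terms of $\nu_n^{V*}(\overline{\cF}_n)$ exactly as the paper does. The only difference is cosmetic (you estimate the covering number before the local Lipschitz constant, the paper does the reverse), and your closing caveat about constant alignment is precisely the bookkeeping the paper carries out.
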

Under the locally quadratic condition \eqref{localQuad1} and \eqref{localQuad2},
the $o(1/\sqrt{n})$ term in Corollary~\ref{cor1} is thus specified as $O( \log^{1/2} n / n^{1/2 + 1/d} )$
for the variance-based regularization.

\section{Experiments}\label{secExp}
We demonstrate the greater efficiency of the proposed algorithms
in simple experiments with synthesis data,
whose experimental setting is introduced in the previous study~\cite{maurer2009empirical}.

\subsection{Experimental setting}
We first define $K = 500$ and $B = 1/4$.
For all $k=1,2,\dots,K$, we then generate parameters $a_k$ from the uniform distribution over $[B, 1-B]$,
and $b_k$ independently from the uniform distribution over $[0,B]$.
We then define empirical risk minimization problem as follows. 
We define $\cX = [0,1]^K$, $\cH = \{h \in \{0,1 \}^K \mid \sum_{k=1}^K h_k = 1 \}$,
and $\ell(h,x) = \sum_{k=1}^K h_k x_k$.
The distribution $\cP$ is then defined by: for each $k=1,2,\dots,K$, $X_k$ is $a_k + b_k$ or $a_k - b_k$ with equal probability $1/2$.
Note that it then holds that $\rE[X_k] = a_k$ and $\Var[X_k] = b_k$,
and the true optimum hypothesis $h^*$ is defined by $h^*_{k^*} = 1$ (and $h_k = 0$ if $k \neq k^*$), where $k^* = {\argmin}_{k} a_k$.

For this setting, given $n$ samples from $\cP$, we apply 
(non-regularized) empirical risk minimization (ERM),
variance-based regularization
with a regularization scale given by previous study~\cite{maurer2009empirical} (VBR),
and the regularization on the bases of the proposed empirical hypothesis space reduction algorithm (HSR).
More concretely, we define $\delta = 0.5$, which corresponds to upper-bounding median,
and then the regularization scale for VBR is defined by $\lambda_n = \sqrt{2\log(2K/\delta) / n}$
on the basis of~\cite[Corollary 7]{maurer2009empirical}.
For HSR, we define a series of parameters 
as $\Delta_n = \sqrt{2\log(2Kn/\delta)/(n-1)}$, $\alpha_n = \sqrt{2\log(2Kn/\delta)}$, $\beta_n = \log(2Kn/\delta) / (3n)$, and $\mu_n(\cF) = \sqrt{2\log(2n|\cF|/\delta(n-2))}$ for a finite subset $\cF \subseteq \cH$,
on the basis of Bennett's inequality (see~\cite[Theorem 3]{maurer2009empirical}) and the union bound.
Note that, for finite hypothesis space $\cH$,
$\alpha_n$ and $\mu_n$ can be rather simply defined using a concentration inequality and the union bound,
compared to the general (possibly continuous) setting introduced in Section~\ref{subsecVBR}.

The sample sizes $n$ ranged from $20$ to $2000$.
All results are average of $1000$ generations of $a_k$ and $b_k$.

\begin{figure*}[t]
	\begin{minipage}[t]{0.485\hsize}
		\begin{center}
			\includegraphics[width=\hsize]{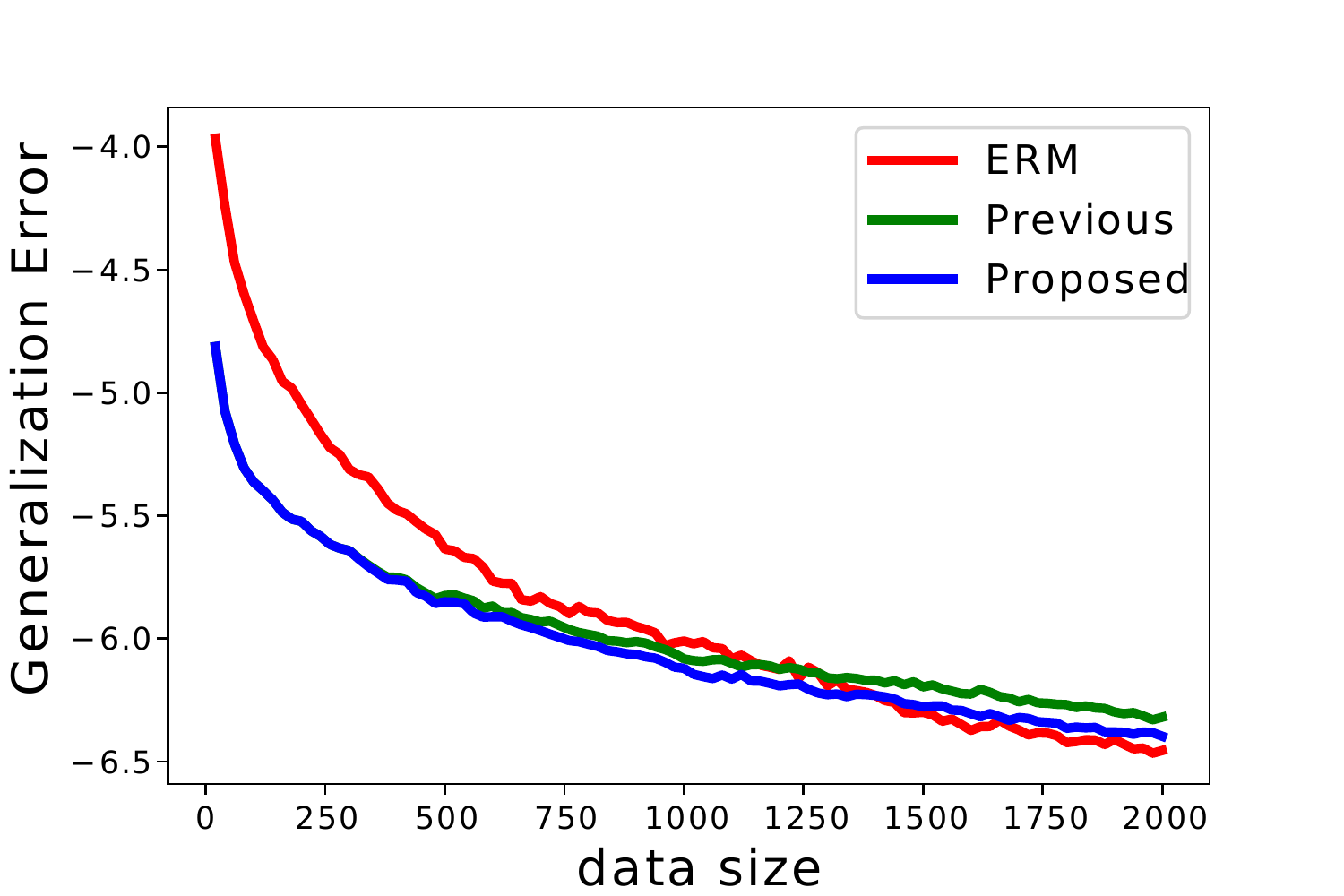}
			\caption{Convergence of generalization error.
				The horizontal line shows the number of samples, and the vertical line shows the logarithm of the generalization error. Respective red, green, and blue lines show the result of ERM, VBR, and HSR.} \label{figGenError}
		\end{center}
	\end{minipage}
	\hspace{0.02\hsize}
	\begin{minipage}[t]{0.485\hsize}
		\begin{center}
			\includegraphics[width=\hsize]{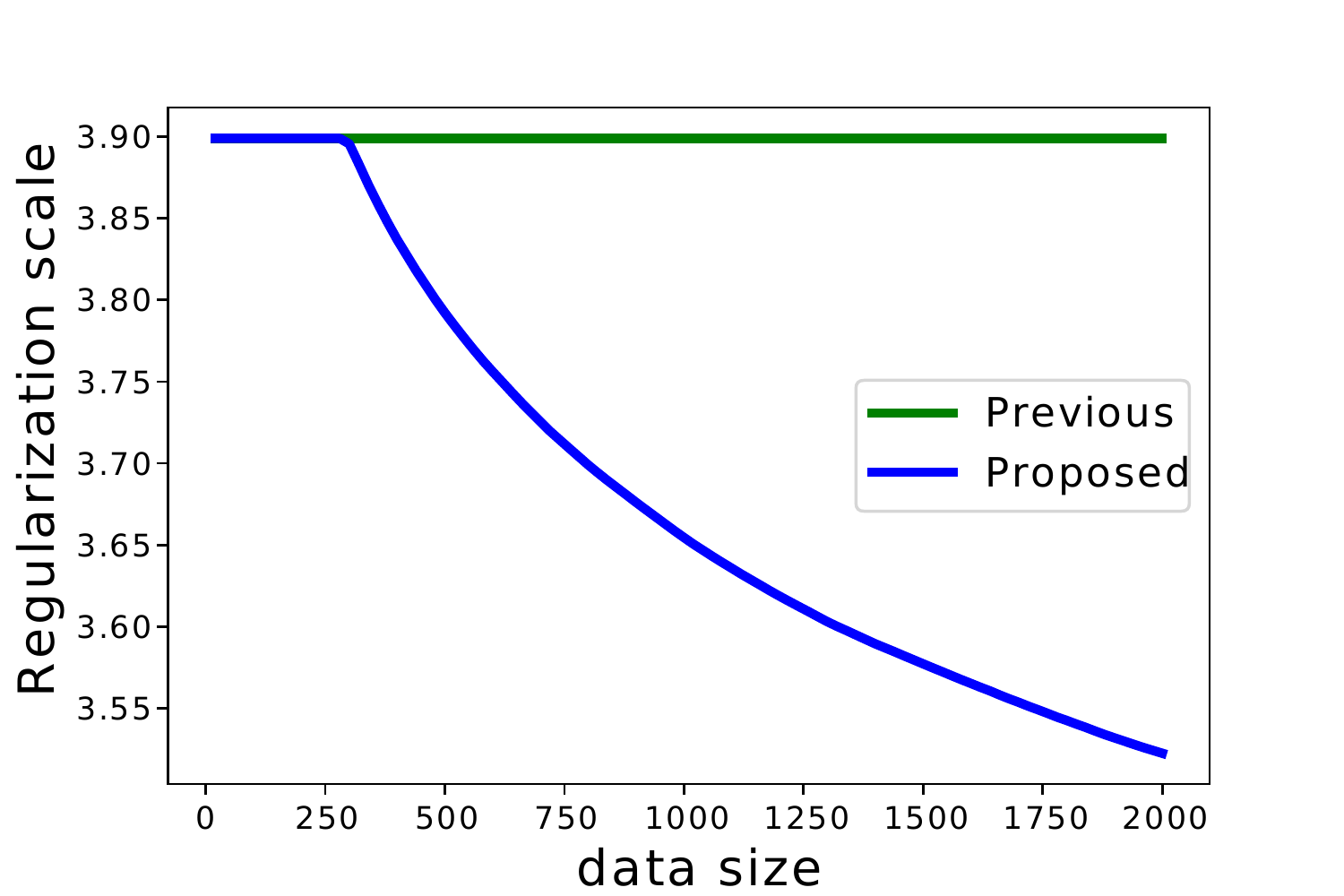}
			\caption{Decrease of regularization scale. The horizontal line shows the number of samples, and the vertical line shows the regularization scale. Respective green and blue lines show the scale of VBR and HSR.}\label{figRegScale}
		\end{center}
	\end{minipage}
	%\vspace{-4truemm}
\end{figure*}

\subsection{Experimental results}
Figure~\ref{figGenError} and~\ref{figRegScale} show the result of the experiments.
Figure~\ref{figGenError} plots the generalization error $L^*(h(X^n)) - L^*_{\min}$
of ERM (red), VBR (green), and HSR (blue).
We observe that, with small sample size $n \leq 1000$,
regularized solutions (VBR and HSR) showed smaller generalization error
than non-regularized solution (ERM).
This indicates that regularization can improve generalization error
by preventing over-fitting on risky hypothesis with high variance.
With large sample size $n \geq 1000$, on the other hand,
ERM showed smaller error than VBR.
This indicates that the regularization scale of VBR is over-conservative for large $n$,
which over-conservativeness prevents faster convergence.
The proposed algorithm (HSR)
shows the best performance with wide range of sample size $n \leq 1500$,
and, even with large sample size $n \geq 1500$,
in comparison with VBR,
HSR showed competitive performance to ERM.
This performance can be explained by Figure 2,
which plots the regularization scale of VBR and HSR against the sample size $n$.
Owing to the hypothesis space reduction mechanism,
once sample number get large enough $n \geq 300$,
HSR can automatically reduce the regularization scale for avoiding over-conservativeness.
Thus, the proposed algorithm achieve both 
stability of regularization in small $n$
and fast convergence of non-regularization in large $n$
at the same time.

%\newpage
%\small

%\nocite{langley00}
%\bibliography{reference}

\bibliographystyle{plain}

%\newpage

\normalsize
\appendix

\section{Proofs}
\subsection{Proof of Proposition~\ref{propVBRCondition}}
We first introduce the following concentration inequalities.
\begin{lem}[{\cite[Theorem 10]{maurer2009empirical}}] \label{lemVar}
	Then with probability at least $1-\delta$, it holds that
	\begin{align*}
		|r^*(h) - r_n(h; X^n)| \leq \frac{2\log 2/\delta}{n-1}
	\end{align*}
\end{lem}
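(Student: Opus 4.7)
The plan is to prove the claimed single-hypothesis concentration for $r_n(h; X^n) = \sqrt{V_n(h; X^n)}$ around $r^*(h) = \sqrt{V^*(h)}$, where the loss values $Y_i := \ell(h, X_i)$ are i.i.d.\ random variables in $[0,1]$. Since $h$ is held fixed throughout the lemma, the problem reduces to a concentration inequality for the sample standard deviation of $n$ bounded i.i.d.\ variables, and no covering or union-bound machinery is needed.

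First, I would rewrite $V_n(h; X^n) = \frac{1}{n(n-1)}\sum_{1 \le i<j \le n}(Y_i-Y_j)^2$, recognizing it as the unbiased U-statistic for the variance, so that $\rE[V_n(h; X^n)] = V^*(h)$. Regarding $f(Y_1,\ldots,Y_n) := \sqrt{V_n(h; X^n)}$ as a symmetric function of $n$ independent arguments, the key step is to establish a bounded-differences property. Modifying a single coordinate $Y_i \mapsto Y_i'$ (both in $[0,1]$) alters $V_n$ by at most $c/n$ for an absolute constant $c$, because $Y_i$ appears in exactly $n-1$ of the $\binom{n}{2}$ pair terms, each of which lies in $[0,1]$. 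The elementary inequality $|\sqrt{a}-\sqrt{b}| \le \sqrt{|a-b|}$ then converts this into a bounded-differences constant of order $1/\sqrt{n(n-1)}$ for $f$.

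Next, I would apply McDiarmid's bounded-differences inequality to $f$, which yields the sub-Gaussian-type tail $|\sqrt{V_n(h; X^n)} - \rE\sqrt{V_n(h; X^n)}| \le \sqrt{2\log(2/\delta)/(n-1)}$ with probability at least $1-\delta$ after splitting the two-sided deviation into $\delta/2$ per tail. The final bridge is between $\rE\sqrt{V_n(h; X^n)}$ and $\sqrt{V^*(h)}$: Jensen's inequality immediately gives $\rE\sqrt{V_n(h; X^n)} \le \sqrt{\rE V_n(h; X^n)} = \sqrt{V^*(h)}$, while the reverse direction follows either by a one-sided McDiarmid argument applied to $-f$ combined with the Jensen bound, or by controlling $\Var(\sqrt{V_n(h; X^n)})$ via Efron--Stein and absorbing the residual into the same $\sqrt{\log(2/\delta)/(n-1)}$ rate.

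The main obstacle is the non-smoothness of $\sqrt{\cdot}$ near the origin: when $V_n$ happens to be tiny, pointwise Lipschitz arguments via the derivative $1/(2\sqrt{V_n})$ blow up, and a naive approach would fail to yield a uniform bounded-differences constant for $f$. I would sidestep this difficulty by relying exclusively on the global inequality $|\sqrt{a}-\sqrt{b}| \le \sqrt{|a-b|}$, which requires no positive lower bound on $V_n$ and cleanly converts the $O(1/n)$ perturbation of $V_n$ under a single-coordinate change into the $O(1/\sqrt{n})$ perturbation of $\sqrt{V_n}$ that the claimed rate demands.
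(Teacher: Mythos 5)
You are attempting to prove a result the paper itself does not prove: Lemma~\ref{lemVar} is imported as a black box from \cite[Theorem~10]{maurer2009empirical}, so the only meaningful comparison is with the proof in that cited source. (Note also that the lemma as printed has a typo: the right-hand side should be $\sqrt{2\log(2/\delta)/(n-1)}$, which is the form you set out to prove and the form actually invoked in the proof of Proposition~\ref{propVBRCondition}(i).) Your reduction of the problem is fine --- fix $h$, set $Y_i := \ell(h,X_i)\in[0,1]$, note $\rE[V_n]=V^*(h)$, and seek concentration of the sample standard deviation --- but the McDiarmid step contains a fatal error.

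Changing one coordinate $Y_i$ changes $V_n$ by at most $(n-1)\cdot\frac{1}{n(n-1)}=\frac{1}{n}$, and your inequality $|\sqrt{a}-\sqrt{b}|\le\sqrt{|a-b|}$ therefore gives a per-coordinate oscillation of $\sqrt{1/n}=1/\sqrt{n}$ for $f=\sqrt{V_n}$, \emph{not} the $1/\sqrt{n(n-1)}$ you claim. This is not a repairable slip: if all $Y_j$ are equal and one coordinate is moved by $1$, then $\sqrt{V_n}$ moves from $0$ to exactly $1/\sqrt{n}$, so the worst-case oscillation genuinely is of order $1/\sqrt{n}$. McDiarmid then has $\sum_i c_i^2\ge n\cdot(1/n)=1$ and yields only $\Prob(|f-\rE f|\ge t)\le 2e^{-2t^2}$, a deviation bound of constant order that does not shrink as $n$ grows; it can never produce a $\sqrt{\log(2/\delta)/(n-1)}$ rate. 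The same defect invalidates the Efron--Stein control of $\Var(\sqrt{V_n})$ in your final bridging step. The obstacle you correctly flagged --- non-smoothness of the square root at the origin --- is precisely what worst-case bounded-difference arguments cannot absorb. The proof in \cite{maurer2009empirical} instead works with $Z:=(n-1)V_n=\sum_i(Y_i-\overline{Y})^2$, shows $Z$ is self-bounding, and applies the entropy-method concentration inequality for self-bounding functions, a Bernstein-type bound whose variance proxy is proportional to $\rE Z$ rather than to worst-case oscillations. The exact identity $|\sqrt{a}-\sqrt{b}|=|a-b|/(\sqrt{a}+\sqrt{b})$ (rather than the crude $\sqrt{|a-b|}$ bound) then converts the multiplicative deviation $|Z-\rE Z|\lesssim\sqrt{\rE Z\,\log(2/\delta)}$ into the additive one $|\sqrt{Z}-\sqrt{\rE Z}|\lesssim\sqrt{\log(2/\delta)}$, which after division by $\sqrt{n-1}$ is exactly the claim. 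Note that this route concentrates $\sqrt{V_n}$ directly around $\sqrt{\rE V_n}=\sqrt{V^*(h)}$, so the Jensen-gap bridge between $\rE\sqrt{V_n}$ and $\sqrt{V^*(h)}$ that your plan requires never arises. An alternative correct route with the same two key features is Talagrand's convex-distance inequality applied to $\sqrt{(n-1)V_n}=\|(Y_1-\overline{Y},\dots,Y_n-\overline{Y})\|$, which is a convex, $1$-Lipschitz function of $(Y_1,\dots,Y_n)$; either way, one needs concentration with a mean-proportional variance proxy plus the exact square-root identity, and neither ingredient appears in your proposal.
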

\begin{lem}[Bennett's inequality]\label{lemBennett}
	Let $Z,Z_1,\dots,Z_n$ be i.i.d. random variables with values in $[0,1]$ and let $\delta>0$.
	Then with probability at least $1-\delta$, it holds that
	\begin{align*}
		\left| \rE[Z] - \frac{1}{n}\sum_{i=1}^n Z_i \right| \leq \sqrt{\frac{2 \Var[Z] \log2/\delta }{n}} + \frac{ \log 2/\delta}{3n}.
	\end{align*}
\end{lem}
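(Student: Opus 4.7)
The plan is to prove the two-sided concentration by establishing one-sided Bennett tails via the Cramér--Chernoff method and then applying a union bound.

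First, for the upper tail, I would control $\rE e^{\lambda(Z - \rE Z)}$ using the classical Bennett moment-generating-function estimate: since $Z - \rE Z \leq 1$ almost surely and the function $x \mapsto (e^x - 1 - x)/x^2$ is nondecreasing (with the extension $1/2$ at $x=0$), one obtains
$$\log \rE e^{\lambda(Z - \rE Z)} \leq \Var[Z]\,(e^\lambda - 1 - \lambda), \qquad \lambda > 0.$$
Taking $n$-fold products by independence and applying Markov's inequality to $e^{\lambda \sum_i(Z_i - \rE Z)}$, then optimizing $\lambda$ at $\lambda^{*} = \log(1 + t/(n\Var[Z]))$, yields the sharp Bennett tail
$$\Prob\!\left(\sum_{i=1}^n (Z_i - \rE Z) \geq t \right) \leq \exp\!\left(-n\Var[Z]\,h\!\left(\tfrac{t}{n\Var[Z]}\right)\right),$$
where $h(u) := (1+u)\log(1+u) - u$.

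Next, I would invert this exponential bound. Setting the right-hand side equal to $\delta/2$ and writing $L = \log(2/\delta)$, $V = n\Var[Z]$, and $s = \sqrt{2L/V}$, the task reduces to showing that the candidate $u = s + s^2/6$ satisfies $h(u) \geq L/V = s^2/2$. This analytic inequality is standard and can be verified by a Taylor expansion of $h$ around $0$ together with monotonicity arguments, yielding with probability at least $1 - \delta/2$
$$\tfrac{1}{n}\sum_{i=1}^n Z_i - \rE Z \leq \sqrt{\frac{2\Var[Z]\log(2/\delta)}{n}} + \frac{\log(2/\delta)}{3n}.$$

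For the lower tail, the same argument applied to the random variables $1 - Z_i \in [0,1]$, which have identical variance $\Var[Z]$, yields the symmetric deviation bound for $\rE Z - \tfrac{1}{n}\sum Z_i$ at confidence $1 - \delta/2$. A union bound over the two one-sided events then gives the two-sided statement at confidence $1 - \delta$, with $\log(2/\delta)$ appearing throughout. The main obstacle is the inversion step with the sharp coefficient $1/3$: the naive Bernstein relaxation $h(u) \geq u^2/(2 + 2u/3)$ together with the quadratic formula produces only a $2/3$ constant in the linear term, so obtaining $1/3$ requires the more delicate analytic inequality $h(s + s^2/6) \geq s^2/2$, verified by direct analysis of $h$ rather than by the Bernstein short cut.
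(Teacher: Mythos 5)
Your proposal is correct, but note that the paper contains no proof of this lemma at all: it is imported as a known result (it is essentially Theorem 3 of Maurer and Pontil~\cite{maurer2009empirical}, the classical inversion of Bennett's inequality, two-sided via $\log(2/\delta)$), so any self-contained derivation is necessarily a different route from the paper. Your route is the standard textbook one and every step is sound: the Bennett MGF bound, Chernoff optimization at $\lambda^*=\log(1+t/(n\Var[Z]))$, the sharp inversion, and the reflection $Z_i\mapsto 1-Z_i$ (which indeed preserves the variance) plus a union bound for the two-sided statement. You also correctly diagnose the crux: the Bernstein relaxation $h(u)\geq u^2/(2+2u/3)$ inverts to $t=\tfrac{L}{3}+\sqrt{\tfrac{L^2}{9}+2VL}$, which exceeds $\sqrt{2VL}+\tfrac{L}{3}$, so it can only certify the constant $2/3$. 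The one step you assert rather than prove, $h(s+s^2/6)\geq s^2/2$, is true and is completable by exactly the monotonicity argument you gesture at: writing $\phi(s)=h(s+s^2/6)-s^2/2$ and using $h'(u)=\log(1+u)$, one gets
\begin{align*}
\phi'(s) = \left(1+\tfrac{s}{3}\right)\log\left(1+s+\tfrac{s^2}{6}\right) - s,
\end{align*}
and the inequality $\log\left(1+s+\tfrac{s^2}{6}\right)\geq \tfrac{3s}{3+s}$ holds because both sides agree at $s=0$ and the difference has derivative
\begin{align*}
\frac{2(3+s)}{6+6s+s^2}-\frac{9}{(3+s)^2} = \frac{s^2(9+2s)}{(6+6s+s^2)(3+s)^2}\geq 0,
\end{align*}
whence $\phi'\geq 0$ and $\phi\geq\phi(0)=0$; no global case analysis or large-$s$ asymptotics is needed. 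An alternative that avoids analyzing $h$ altogether is to relax the MGF rather than the tail: $e^{\lambda}-1-\lambda\leq \lambda^2/(2(1-\lambda/3))$ for $0<\lambda<3$ (from $k!\geq 2\cdot 3^{k-2}$), which exhibits the centered sum as sub-gamma with variance factor $V$ and scale $1/3$, and the Chernoff bound for that MGF inverts exactly to $\sqrt{2Vu}+u/3$.
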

\begin{lem}[Hoeffding's inequality]\label{lemHoeffding}
	Let $Z,Z_1,\dots,Z_n$ be i.i.d. random variables with values in $[0,1]$ and let $\delta>0$.
	Then with probability at least $1-\delta$, it holds that
	\begin{align*}
		Check\\
		\left| \rE[Z] - \frac{1}{n}\sum_{i=1}^n Z_i \right| \leq \sqrt{\frac{\log1/\delta}{2n}}.
	\end{align*}
\end{lem}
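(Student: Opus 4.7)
The plan is to follow the classical Chernoff-plus-MGF-bound route. First I would reduce to the centered variables $Y_i := Z_i - \rE[Z]$, which take values in $[-\rE[Z], 1 - \rE[Z]]$, hence in an interval of length $1$, and have mean zero. Setting $\bar Y_n := \frac{1}{n}\sum_{i=1}^n Y_i$, the goal becomes showing $\Prob(\bar Y_n \geq \varepsilon) \leq \exp(-2n\varepsilon^2)$ for the upper tail, and then combining with the symmetric lower tail bound by a union bound to obtain the two-sided statement with $2e^{-2n\varepsilon^2}$ on the right.

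For the upper tail I would apply Markov's inequality to $\exp(tn\bar Y_n)$ for any $t > 0$: by i.i.d.\ and independence,
\begin{align*}
	\Prob(\bar Y_n \geq \varepsilon) \leq e^{-tn\varepsilon} \prod_{i=1}^n \rE[e^{tY_i}].
\end{align*}
The key auxiliary estimate is Hoeffding's lemma: for a mean-zero random variable $Y$ supported in an interval of length $1$, one has $\rE[e^{tY}] \leq e^{t^2/8}$. I would prove this by writing $Y$ as a convex combination of the endpoints of its support, applying Jensen's inequality to the convex function $y \mapsto e^{ty}$, and then showing that the logarithm of the resulting expression, viewed as a function of $t$, has second derivative bounded by $1/4$; a second-order Taylor expansion around $t=0$ then yields the $t^2/8$ bound. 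Plugging this into the displayed inequality gives $\Prob(\bar Y_n \geq \varepsilon) \leq \exp(nt^2/8 - tn\varepsilon)$.

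Finally I would optimize the free parameter $t > 0$. The quadratic exponent is minimized at $t = 4\varepsilon$, yielding the upper tail bound $\exp(-2n\varepsilon^2)$. Symmetry (apply the same argument to $-Y_i$) gives the matching lower tail, and a union bound produces $\Prob(|\bar Y_n| \geq \varepsilon) \leq 2\exp(-2n\varepsilon^2)$. Setting the right-hand side equal to $\delta$ and solving for $\varepsilon$ gives the claimed bound $\varepsilon = \sqrt{\log(1/\delta)/(2n)}$ up to the customary factor of $2$ inside the logarithm that arises from the union bound (the statement as written corresponds to the one-sided version, or absorbs the factor of $2$ into $\delta$).

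The only nontrivial step is Hoeffding's lemma itself; the rest is bookkeeping with the Chernoff method. I would expect to spend the bulk of the proof on the second-derivative computation $\frac{d^2}{dt^2} \log \rE[e^{tY}] \leq 1/4$, since this is where the sharp constant $2$ in the exponent ultimately comes from. Once that is established, the optimization over $t$ and the union bound are routine.
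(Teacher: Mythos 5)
Your proof is correct, but there is nothing in the paper to compare it against step by step: the paper does not prove Lemma~\ref{lemHoeffding} at all. It is invoked as a classical black-box concentration bound (alongside Bennett's inequality, Lemma~\ref{lemBennett}), and the stray ``Check'' left in the displayed statement suggests the authors flagged it for verification rather than proving it. Your write-up therefore supplies what the paper omits, via the standard route: centering to $Y_i = Z_i - \rE[Z]$ with support in an interval of length $1$, the exponential Markov (Chernoff) bound, Hoeffding's lemma $\rE[e^{tY}] \leq e^{t^2/8}$ proved through the chord bound (finite Jensen) and the second-derivative estimate $\frac{d^2}{dt^2}\log\rE[e^{tY}] \leq 1/4$, and optimization at $t = 4\varepsilon$; all of these steps are sound. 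You are also right to flag the factor of $2$: the two-sided bound obtained by a union bound over both tails is $\Prob\left(\left|\frac{1}{n}\sum_{i=1}^n Z_i - \rE[Z]\right| \geq \varepsilon\right) \leq 2e^{-2n\varepsilon^2}$, so the statement with absolute values should read $\sqrt{\log(2/\delta)/(2n)}$; as written, with $\log(1/\delta)$, it is only valid one-sided. That is a (minor) defect of the paper's statement, not of your argument --- and, consistent with your observation, where the paper actually applies the lemma (in the proof of Proposition~\ref{propVBRCondition}(iv)) it keeps a factor $2$ inside the logarithm.
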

Proposition~\ref{propVBRCondition} can then be proven as follows.
\begin{proof}[Proof of Proposition~\ref{propVBRCondition}]
	(i) We first observe that, for any $h_1, h_2 \in \cH$ and $x^n \in \cX^n$, it holds that
	\begin{align*}
		&r_n(h_1; x^n) - r_n(h_2; x^n)\\
		&= \sqrt{\frac{1}{n(n-1)} \sum_{1 \leq i<j \leq n} (\ell(h_1,x_i) - \ell(h_1,x_j))^2 } - \sqrt{\frac{1}{n(n-1)} \sum_{1 \leq i<j \leq n} (\ell(h_2,x_i) - \ell(h_2,x_j))^2 }  \\
		&\leq \sqrt{\left| \frac{1}{n(n-1)} \sum_{1 \leq i<j \leq n}  \left((\ell(h_1,x_i) - \ell(h_1,x_j))^2 -  (\ell(h_2,x_i) - \ell(h_2,x_j))^2 \right)  \right| }\\
		&\leq \sqrt{\left| \frac{2}{n(n-1)} \sum_{1 \leq i<j \leq n}  \left| |\ell(h_1,x_i) - \ell(h_1,x_j)| -  |\ell(h_2,x_i) - \ell(h_2,x_j)|  \right|^2  \right| }\\
		&\leq \sqrt{\left| \frac{2}{n(n-1)} \sum_{1 \leq i<j \leq n}  (2 c_{\ell} \|h_1 - h_2 \|)^2   \right| }\\
		&\leq 2 \sqrt{2}c_{\ell} \|h_1 - h_2 \|  .
	\end{align*}
	Similarly, it holds that
	\begin{align}
		r^{V*}(h_1) - r^{V*}(h_2) \leq 2\sqrt{2}c_{\ell} \|h_1 - h_2 \| \label{ineqDiffVar}
	\end{align}
	By the definition of the covering number,
	there exists a finite subset $\cH_0 \subseteq \cH$ such that $|\cH_0| \leq \cN(1/n,\cH)$ and,
	for any $h \in \cH$, there exists $h_0 \in \cH$ such that $\| h - h_0 \| \leq 1/n$.
	By Lemma~\ref{lemVar} and the union bound, with probability $1-\delta/n$, the following holds for all $h_0 \in \cH_0$:
	\begin{align*}
		|r^*(h) - r_n(h; X^n)| \leq \sqrt{\frac{2\log 2n \cN(1/n,\cH)/\delta}{n-1} }.
	\end{align*}
	For any $h \in \cH$, there exists $h_0 \in \cH_0$ with $\| h - h_0 \| \leq 1/n$, and thus
	\begin{align*}
		|r^*(h) - r_n(h; X^n)| &\leq |r^*(h_0) - r_n(h_0; X^n)| + 4\sqrt{2}c_{\ell}\| h - h_0 \|\\
		&\leq \sqrt{\frac{3\log (2n \cN(1/n,\cH)/\delta)}{n} } + \frac{4\sqrt{2} c_{\ell}}{n}
	\end{align*}
	The second inequality holds since $1/ (n-1) \leq 3/(2n)$ for $n \geq 6$.
	
	(ii) It is trivial since $\ell$ takes value in $[0,1]$.
	
	(iii) For any $h_1, h_2 \in \cH$ and $x^n \in \cX^n$, it holds that
	\begin{align}
		|L(h_1; x^n) - L(h_2; x^n)| \leq c_{\ell} \|h_1 - h_2 \|,\label{ineqii1} \\
		|L^*(h_1) - L^*(h_2)| \leq c_{\ell} \|h_1 - h_2 \|.\label{ineqii2} 
	\end{align}
	With $\cH_0$ defined above, by Lemma~\ref{lemBennett} and the union bound,
	with a probability of at least $1-\delta/n$, the following holds for all $h_0 \in \cH_0$:
	\begin{align}
		|L^*(h_0 ) - L(h_0 ;X^n)| \leq \sqrt{\frac{2 \log ( 2n\cN(1/n,\cH)/\delta ) }{n}} r^*(h) + \frac{\log ( 2n \cN(1/n,\cH)/\delta )}{3n} \label{ineqii3} 
	\end{align}
	For any $h \in \cH$, there exists $h_0 \in \cH_0$ with $\| h - h_0 \| \leq 1/n$, and thus
	\begin{align*}
		&|L^*(h ) - L(h ;X^n)|\\
		&\leq |L^*(h_0 ) - L(h_0 ;X^n)| + |L(h_0; x^n) - L(h; x^n)| + |L^*(h_0) - L^*(h)| \\ 
		&\leq |L^*(h_0 ) - L(h_0 ;X^n)|  +  2c_{\ell} \|h - h_0 \|\\
		&\leq \sqrt{\frac{2 \log ( 2n \cN(1/n,\cH)/\delta )}{n}} r^*(h_0) + \frac{6c_{\ell} + \log ( 2n \cN(1/n,\cH)/\delta )}{3n}\\
		&\leq \sqrt{\frac{2 \log( 2n \cN(1/n,\cH)/\delta )}{n}} \left(r^*(h) + \frac{2\sqrt{2 } c_{\ell}}{n} \right) +  \frac{6c_{\ell} +  \log( 2n \cN(1/n,\cH)/\delta )}{3n}\\
		&= \sqrt{\frac{2 \log ( 2n \cN(1/n,\cH)/\delta ) }{n}} r^*(h) +  \frac{6c_{\ell} +\log ( 2n \cN(1/n,\cH)/\delta )}{3n} + \frac{ 4 c_{\ell} \sqrt{ \log ( 2n \cN(1/n,\cH)/\delta )}}{n^{3/2}}\\
		&\leq \sqrt{\frac{2 \log (2n \cN(1/n,\cH)/\delta) }{n}} r^*(h) +  \frac{(4 c_{\ell}+1) \log (2n \cN(1/n,\cH)/\delta)  }{n}.
	\end{align*}
	The second inequality follows from \eqref{ineqii1} and \eqref{ineqii2},
	and the third inequality follows from \eqref{ineqii3} and $\| h - h_0 \| \leq 1/n$.
	The forth ineuqlity follows from \eqref{ineqDiffVar}.
	The last inequality holds since $n^{1/2} \geq 2$ and $1 \leq \sqrt{\log 2n \cN(1/n,\cH)/\delta} \leq \log (2n \cN(1/n,\cH)/\delta)$.
	
	(iv) By the definition of the covering number,
	there exists a finite subset $\cF_0 \subseteq \cF$ such that $|\cF_0| \leq \cN(\varepsilon_n,\cF)$ and,
	for any $h \in \cF$, there exists $h_0 \in \cF$ such that $\| h - h_0 \| \leq \varepsilon_n$.
	For $h \in \cF$, $h_0 \in \cF_0$ and $\Delta h := h - h_0$, by Taylor's theorem,
	there exists $q \in [0,1] $ satisfying
	\begin{align*}
		|L^*(h) - L^*(h_0)| = \left| \rE \left[ \nabla \ell(\cdot, X)(h_0)^{\top} \right] \Delta h +     \Delta h^{\top}  \rE \left[\frac{\nabla^2 \ell(\cdot, X)(h_0 + q \Delta h)}{2}  \right]\Delta h   \right| 
	\end{align*}
	Then we have
	\begin{align}
		&\left| \rE \left[ \nabla \ell(\cdot, X)(h_0)^{\top} \right] \Delta h \right| - \frac{p_2^* }{2} \| h - h_0 \|^2 \nonumber \\
		&\leq \left| \rE \left[ \nabla \ell(\cdot, X)(h_0)^{\top} \right] \Delta h \right| -  \left|  \Delta h^{\top}  \rE \left[\frac{\nabla^2 \ell(\cdot, X)(h_0 + q \Delta h)}{2}  \right]\Delta h   \right| \nonumber \\
		&\leq |L^*(h) - L^*(h_0)| \leq c_{L^*}(\cF) \|h - h_0  \| . \label{ineqiv2}
	\end{align}
	Also for any $x^n \in \cX^n$, by Taylor's theorem, there exists $q' \in [0,1] $ satisfying
	\begin{align}
		|L(h; x^n) - L^*(h_0; x^n)| &= \left| \frac{1}{n} \sum_{i=1}^n  \nabla \ell(\cdot, X_i)(h_0)^{\top} \Delta h + \frac{1}{n} \sum_{i=1}^n   \Delta h^{\top}  \frac{\nabla^2  \ell(\cdot, X_i)(h_0 + q' \Delta h)}{2} \Delta h   \right| \nonumber\\
		&\leq \left| \frac{1}{n} \sum_{i=1}^n  \nabla \ell(\cdot, X_i)(h_0)^{\top} \Delta h \right|  + \frac{p_2^* }{2} \| h - h_0 \|^2\label{ineqiv3}
	\end{align}
	
	By Lemma~\ref{lemBennett} and the union bound,
	with a probability of $1- (n-3)\delta/n$, 
	the following holds for all $h_0 \in \cH_0$:
	\begin{align}
		|L^*(h_0 ) - L(h_0 ;X^n)| &\leq \sqrt{\frac{2 \log 2n\cN(\varepsilon_n,\cF)/(n-3)\delta }{n}} r^*(h_0) + \frac{\log (2n\cN(\varepsilon_n,\cF)/(n-3)\delta  )}{3n} \nonumber \\
		&\leq \sqrt{\frac{2 \log 2n\cN(\varepsilon_n,\cF)/(n-3)\delta }{n}} r^*(h_0) + \frac{\log (4\cN(\varepsilon_n,\cF)/\delta) }{3n} \label{probiv1}
	\end{align}
	The last inequality holds since $n \geq 6$.
	By Lemma~\ref{lemHoeffding}, with a probability of $1-\delta/n$, the following holds for all $h_0 \in \cF_0$ and $j=1,2,\dots,d$:
	\begin{align*}
		\left| \frac{1}{n}  \sum_{i=1}^n \frac{ \partial \ell(\cdot , X_i)}{ \partial h_j} - \rE \left[ \frac{ \partial \ell(\cdot , X)}{ \partial h_j} \right] \right| \leq \sqrt{ \frac{ 2p_1^{*2} \log ( 2 d n \cN(\varepsilon_n,\cF)/\delta )}{n}}, % \label{probiv2}
	\end{align*}
	and thus
	\begin{align*}
		\left\| \frac{1}{n}  \sum_{i=1}^n\nabla \ell(\cdot , X_i) - \rE \left[ \nabla \ell(\cdot , X) \right] \right\| \leq \sqrt{ \frac{ 2p_1^{*2}d^2 \log ( 2 d n \cN(\varepsilon_n,\cF)/\delta )}{n}}. %\label{probiv2}
	\end{align*}
	This inequality together with \eqref{ineqiv2} and \eqref{ineqiv3} implies that
	\begin{align}
		&|L(h_0 ;X^n) - L(h ;X^n)| \nonumber \\
		&\leq \left| \frac{1}{n} \sum_{i=1}^n  \nabla \ell(\cdot, X_i)(h_0)^{\top} \Delta h \right|  + \frac{p_2^* }{2} \| h - h_0 \|^2\nonumber \\
		&\leq \left| \rE \left[ \nabla \ell(\cdot , X)^{\top} \right] \Delta h \right| +   \left| \left(\frac{1}{n} \sum_{i=1}^n  \nabla \ell(\cdot, X_i)(h_0)^{\top}  - \rE \left[\nabla \ell(\cdot , X)(h_0)^{\top}\right] \right)  \Delta h \right|   + \frac{p_2^* }{2} \| h - h_0 \|^2\nonumber \\
		&\leq c_{L^*}(\cF) \|h - h_0  \| + \left\| \left(\frac{1}{n} \sum_{i=1}^n  \nabla \ell(\cdot, X_i)(h_0)^{\top}  - \rE \left[\nabla \ell(\cdot , X)(h_0)^{\top}\right] \right)\right\| \|h - h_0  \| + p_2^*  \| h - h_0 \|^2 \nonumber \\
		&\leq    c_{L^*}(\cF) \|h - h_0  \| +  \sqrt{ \frac{ 2p_1^{*2}d^2 \log ( 2 d n \cN(\varepsilon_n,\cF)/\delta )}{n}}  \|h - h_0  \| + p_2^*  \| h - h_0 \|^2. \label{probiv2}
	\end{align}
	
	Both \eqref{probiv1} and \eqref{probiv2} holds with probaiblity at least $1 - (n-2)/(\delta n)$, by the union bound.
	Then, for any $h \in \cF$, there exists $h_0 \in \cF_0$ such that $\|h - h_0 \| \leq \varepsilon_n$, and thus
	\begin{align*}
		&|L^*(h ) - L(h ;X^n)| \\
		&\leq |L^*(h_0 ) - L(h_0 ;X^n)| + |L^*(h_0 ) - L^*(h)| + |L(h_0 ;X^n) - L(h ;X^n)|\\
		&\leq \left(\sqrt{\frac{2 \log 2n\cN(\varepsilon_n,\cF)/(n-3)\delta }{n}} r^*(h_0) + \frac{\log (4\cN(\varepsilon_n,\cF)/\delta) }{3n} \right) \\
		&\quad + 2c_{L^*}(\cF) \| h - h_0 \|  + \sqrt{ \frac{ 2p_1^{*2}d^2 \log ( 2 d n \cN(\varepsilon_n,\cF)/\delta )}{n}}  \|h - h_0  \| + p_2^*  \| h - h_0 \|^2. \\
		&\leq  \sqrt{\frac{2 \log 2n\cN(\varepsilon_n,\cF)/ (n-3)\delta }{n}}( r^*(h) + 2\sqrt{2} c_{\ell} \varepsilon_n  ) + \frac{\log (4\cN(\varepsilon_n,\cF)/\delta ) }{3n}  \\
		&\quad + \left(2c(\cF) + \sqrt{ \frac{ 2p_1^{*2} d^2 \log ( 2 d n \cN(\varepsilon_n,\cF)/\delta)}{n}} \right) \varepsilon_n  + p_2^* \varepsilon_n^2 \\
		&= \sqrt{\frac{2 \log 2 n\cN(\varepsilon_n,\cF)/(n-3) \delta }{n}}r^*(h) + \frac{\log (4\cN(\varepsilon_n,\cF)/\delta ) }{3n}  \\
		&\quad + \left(2c(\cF) + \sqrt{ \frac{ 2p_1^{*2} d^2 \log 2 d n \cN(\varepsilon_n,\cF)/\delta}{n}} + \sqrt{\frac{8 \log (4\cN(\varepsilon_n,\cF)/\delta ) }{n}} c_{\ell}\right) \varepsilon_n  + p_2^* \varepsilon^2 \\
		&\leq \sqrt{\frac{2 \log (2 n\cN(\varepsilon_n,\cF)/(n-3) \delta )}{n}}r^*(h) + \frac{2 c(\cF) \log^{1/4} (n/\delta)}{n^{1/4+1/d}} + \frac{p_2^* \log^{1/2}(n/\delta)}{n^{1/2 + 2/d}} \\
		&\quad + \frac{4\sqrt{ p_1^{*2} d^2 + c_{\ell}^2} \log^{3/4} (2dn \cN(\varepsilon_n,\cF)/\delta)  }{n^{3/4 + 1/d}} + \frac{\log (4\cN(\varepsilon_n,\cF)/\delta)}{3n}
	\end{align*}
	The second inequality follows from \eqref{probiv1}, \eqref{probiv2}, and the definition of $c_{L^*}(\cF)$.
	The third inequality follows from \eqref{ineqDiffVar}.
\end{proof}

\subsection{Proof of Theorem~\ref{thmMain}}

\begin{proof}[Proof of Theorem~\ref{thmMain}]
	Let $h^* \in \cH^*$ denote the true optimum hypothesis 
	satisfying $h^* = {\argmin}_{h \in \cH^*} r^*(h)$.
	Let us define $\underline{\cF}$ by
	\begin{align*}
		\underline{\cF} := \left\{h \in  \cH \left| L^*(h) - L^*_{\min} \leq \frac{ \alpha_n (\max\{ r^*(h) , r^*_{\cH^*}\} + r^*_{\cH^*} + 3\Delta_n)  }{\sqrt{n}}  + 3\beta_n \right\} \right..
	\end{align*}
	By the uniform bound, for $X^n \sim \cP^n$, \eqref{dfEmpReg}, \eqref{dfUnif}, and \eqref{dfSpatial} for $\underline{\cF}$ hold at the same time with a probability at least $1-\delta$.
	Thus it is enough to show that: if $x^n \in \cX^n$ satisfies 
	\begin{align}
		|r^*(h) - r_n(h;x^n)| &\leq \Delta_n ,\\
		|L^*(h) - L(h; x^n) | &\leq \frac{\alpha_n r^*(h) }{\sqrt{n}} + \beta_n, \quad \forall h \in \cH, \\
		|L^*(h) - L(h; x^n) | &\leq  \frac{\mu_n(\underline{\cF}) r^*(h) }{\sqrt{n}} + \nu_n^*(\underline{\cF}), \quad \forall h \in \underline{\cF},
	\end{align}
	then it holds that
	\begin{align}
		L^*(h(x^n)) - L^*_{\min} \leq  \frac{2\mu_n(\overline{\cF})}{\sqrt{n}}(r^*_{\cH^*} + \Delta_n) + 2 \nu_n (\overline{\cF}).
	\end{align}
	
	We first prove 
	\begin{align}
		L^*(h^*) \leq v(x^n) +  \frac{\alpha_n \Delta_n }{\sqrt{n}} + \beta_n \leq L^*(h^*)  +  \frac{2 \alpha_n (r^*(h^*)  + \Delta_n ) }{\sqrt{n}} + 2\beta_n. \label{hatV}
	\end{align}
	Let $g(x^n) \in \cH$ be the optimal solution corresponds $v(x^n)$ in Line~\ref{alg1line1}, i.e., 
	\begin{align*}
		g(x^n) := \argmin_{h \in \cH} L(h; x_n) + \frac{\alpha_n}{\sqrt{n}} r_n(h ;x^n).
	\end{align*}
	Then \eqref{hatV} holds since
	\begin{align*}
		L^*(h^*)  \leq L^*(g(x^n)) &\leq L(g(x^n); x^n) + \frac{\alpha_n r^*(g(x^n)) }{\sqrt{n}} + \beta_n \\
		&\leq L(g(x^n); x^n) + \frac{\alpha_n (r_n(g(x^n); x^n) + \Delta_n )}{\sqrt{n}} + \beta_n \\
		&= v(x^n) + \frac{\alpha_n \Delta_n}{\sqrt{n}}  + \beta_n \\
		&\leq L(h^*; x^n) + \frac{\alpha_n (r^*(h^*) + 2\Delta_n ) }{\sqrt{n}} + \beta_n \\
		&\leq L^*(h^*) + \frac{2 \alpha_n (r^*(h^*) + \Delta_n )}{\sqrt{n}}  + 2\beta_n .
	\end{align*}
	
	We then prove 
	\begin{align}
		\max_{h \in \underline{\cF}} r_n(h;D_n) \leq u_n(\cG(x_n); x^n) \leq u_n^*( \overline{\cG}).
	\end{align}
	Since
	\begin{align*}
		L(h^*; x^n) &\leq L^* (h^*) + \frac{\alpha_n r^*(h^*)}{\sqrt{n}}  + \beta_n \\
		&\leq  v(x^n) + \frac{\alpha_n (r^*(h^*; x^n) +  2\Delta_n )}{\sqrt{n}}  + 2\beta_n,
	\end{align*}
	we have $h^* \in \cG(x^n)$.
	The first inequality holds since
	\begin{align*}
		\underline{\cF} &= \left\{h \in  \cH \left| L^*(h) \leq L^*(h^*) + \frac{ \alpha_n (\max\{ r^*(h) ,r^*(h^*)\} + r^*(h^*) + 3\Delta_n) }{\sqrt{n}}  + 3\beta_n  \right\} \right. \\
		&\subseteq \left\{h \in  \cH \left| L(h; x^n) \leq v(x^n) + \frac{ \alpha_n (\max\{ r_n(h; x^n) ,r_n(h^*; x^n)\} + r_n(h^* ; x^n) + r_n(h ; x^n) + 7\Delta_n) }{\sqrt{n}}  + 5\beta_n  \right\} \right..
	\end{align*}
	Since $h^* \in \underline{\cF}$, it holds that $\max_{h \in \underline{\cF}} r_n(h; x^n) \leq \max_{h \in \cG(x_n)} r_n(h; x^n) $. Then, by the definition of $u_n$, the first inequality holds.
	For the second inequality, observe that, for any $h \in \cG(x_n)$, we have
	\begin{align*}
		L^*(h) &\leq L(h;x^n) + \frac{\alpha_n r_n^*(h)}{\sqrt{n}}  + \beta_n\\
		&\leq v(x_n) + \frac{3 \alpha_n r(h;x^n) + \alpha_n r_n^*(h)  + 7\alpha_n \Delta_n}{\sqrt{n}}  + 6\beta_n\\
		&= v(x_n) + \frac{4 \alpha_n r^*(h) + 10\alpha_n \Delta_n }{\sqrt{n}} + 6\beta_n\\
		&\leq L^*(h^*) +  \frac{2 \alpha_n r^*(h^*) + \alpha_n\Delta_n}{\sqrt{n}}  + \beta_n + \frac{4 \alpha_n r^*(h) + 10\alpha_n \Delta_n }{\sqrt{n}} + 6\beta_n \\
		&\leq   L^*(h^*) +  \frac{6 \alpha_n r^*(h) + 11\alpha_n \Delta_n }{\sqrt{n}} + 7\beta_n,
	\end{align*}
	and thus $\cG(x_n) \subseteq \overline{\cG}$.
	Then the inequality holds by the definition of $u_n$ and $u_n^*$.
	
	Third, we prove 
	\begin{align}
		h(x^n) \in \underline{\cF} \subseteq \cF(x^n) \subseteq \overline{\cF}.
	\end{align}
	For the first inclusion, we have
	\begin{align*}
		L^*(h(x^n)) &\leq L(h(x^n); x^n) + \frac{ \mu_n (\cF(x^n))r^*(h(x^n)) + (\alpha_n - \mu_n(\cF(x^n))) r^*(h(x^n))}{ \sqrt{n}}  + \beta_n \\
		&\leq L(h(x^n); x^n) + \frac{ \mu_n (\cF(x^n)) (r_n(h(x^n) ; x^n) + \Delta_n) + (\alpha_n - \mu_n(\cF(x^n))) r^*(h(x^n)) }{ \sqrt{n}} + \beta_n\\
		&\leq L(h^*; x^n) + \frac{ \mu_n (\cF(x^n)) (r_n(h^* ; x^n) + \Delta_n) + (\alpha_n - \mu_n(\cF(x^n))) r^*(h(x^n)) }{ \sqrt{n}}  + \beta_n\\
		&\leq L^*(h^*) + \frac{\alpha_n r^*(h^*) }{\sqrt{n}} + \beta_n + \frac{ \mu_n (\cF(x^n)) (r^*(h^*) + 2\Delta_n) + (\alpha_n - \mu_n(\cF(x^n))) r^*(h(x^n))}{ \sqrt{n}}  + \beta_n \\
		&\leq L^*(h^*)  + \frac{\alpha_n (r^*(h^*) + \max\{r^*(h^*), r^*(h(x^n)) \} + 2\Delta_n )}{\sqrt{n}}  + 2\beta_n \\
		&\leq  v(x^n)  + \frac{\alpha_n (r^*(h^*) + \max\{r^*(h^*), r^*(h(x^n)) \} + 3\Delta_n )}{\sqrt{n}}  + 3\beta_n.
	\end{align*}
	and thus $h(x^n) \in \underline{\cF}$.
	For the second inclusion, for any $h \in \underline{\cF}$, then, we have
	\begin{align*}
		L(h; x^n) &\leq L^*(h) + \frac{\alpha_n r^*(h) }{\sqrt{n}} + \beta_n \\
		&\leq L^*(h^*) + \frac{ \alpha_n (\max\{ r^*(h) ,r^*(h^*)\} + r^*(h^*) + 3\Delta_n)  }{\sqrt{n}}  + 3\beta_n + \frac{\alpha_n r_n(h;x^n) + \alpha_n \Delta_n }{\sqrt{n}} + \beta_n \\
		&\leq v(x^n) +  \frac{\alpha_n \Delta_n }{\sqrt{n}} + \beta_n + \frac{ 2 \alpha_n u_n(\cG(x^n)) + 3\alpha_n \Delta_n)  }{\sqrt{n}}  + 3\beta_n + \frac{\alpha_n r_N(h;x^n) + \alpha_n \Delta_n}{\sqrt{n}}  + \beta_n \\
		&= v(x^N) +  \frac{ 3\alpha_n u_n(\cG(x^n)) + 5\alpha_n \Delta_n)  }{\sqrt{n}}  + 5\beta_n
	\end{align*}
	and thus $h \in \cF(x^n)$, which implies that $\underline{\cF} \subseteq \cF(x^n)$.
	The third inclusion, for any $h \in \cF(x^n)$, it holds that
	\begin{align*}
		L^*(h) &\leq L(h; x^n) + \frac{\alpha_n r^*(h) }{\sqrt{n}} + \beta_n\\
		&\leq v(x^n) +  \frac{ 3\alpha_n u_n(\cG(x^n)) + 5\alpha_n \Delta_n)  }{\sqrt{n}}  + 5\beta_n + \frac{\alpha_n r^*(h)}{\sqrt{n}}  + \beta_n\\
		&\leq L^*(h^*) +  \frac{ \alpha_n (r^*(h^*) + \Delta_n )}{\sqrt{n}} + \beta_n + \frac{ 3\alpha_n u_n(\cG(x^n)) + 5\alpha_n \Delta_n) }{\sqrt{n}}  + 5\beta_n + \frac{\alpha_n r^*(h)}{\sqrt{n}}  + \beta_n\\
		&= L^*(h^*) +  \frac{ \alpha_n (  r^*(h^*) + r^*(h) +  3 u_n(\cG(x^n)) + 6 \Delta_n )}{\sqrt{n}}  + 7\beta_n \\
		&\leq L^*(h^*) +  \frac{ \alpha_n (  r^*(h^*) + 4 u_n^*(\overline{\cG}) + 6 \Delta_n )}{\sqrt{n}}  + 7\beta_n,
	\end{align*}
	and thus $h \in \overline{\cF}$, which implies the desired inclusion.

	Finally, we have
	\begin{align*}
		L^*(h(x^n)) &\leq L^*(h(x^n); x^n) +  \frac{\mu_n(\underline{\cF}) r^*(h)}{\sqrt{n}}  + \nu_n^*(\underline{\cF}) \\
		&\leq L^*(h(x^n); x^n) +  \frac{\mu_n(\cF(x^n)) r_n(h; x^n) + \mu_n(\underline{\cF})\Delta_n }{\sqrt{n}} + \nu_n^*(\underline{\cF}) \\
		&\leq L^*(h^*; x^n) +  \frac{\mu_n(\cF(x^n)) r_n(h^*; x^n) + \mu_n(\underline{\cF})\Delta_n }{\sqrt{n}} + \nu_n^*(\underline{\cF}) \\
		&\leq L^*(h^*) + \frac{\mu_n(\underline{\cF}) r^*(h^*)}{\sqrt{n}}  + \nu_n^*(\underline{\cF}) +  \frac{\mu_n(\cF(x^n)) r_n(h^*; x^n) + \mu_n(\underline{\cF})\Delta_n}{\sqrt{n}}   + \nu_n^*(\underline{\cF}) \\
		&\leq L^*(h^*) + \frac{2 \mu_n(\overline{\cF})( r^*(h^*) +  \Delta_n)}{\sqrt{n}}  + 2\nu_n^*(\overline{\cF}) \\
		&=L^*_{\min} + \frac{2 \mu_n(\overline{\cF})( r^*_{\cH^*} +  \Delta_n)}{\sqrt{n}}  + 2\nu_n^*(\overline{\cF}) 
	\end{align*}
	The proof is complete.
\end{proof}

\begin{proof}[Proof of Corollary~\ref{cor1}]
	The statement holds since 
	\begin{align*}
		&\lim_{n \to \infty} \sqrt{n} \left( \frac{2\mu_n(\overline{\cF}_n)}{\sqrt{n}} (r^*_{\cH^*} + \Delta_n )  + 2 \nu_n^* (\overline{\cF}_n) - \frac{ 2\mu_n^*}{\sqrt{n}} r^*_{\cH^*}   \right) \\
		&= \lim_{n \to \infty} \left( 2(\mu_n(\cH(\xi_n))  - \mu_n^* )  r^*_{\cH^*} + \Delta_n  + 2 \sqrt{n}\nu_n^* (\cH(\xi_n)  \right) \\
		&= \lim_{n \to \infty}2 \mu_n^* r^*_{\cH^*}  \left( 1 - \frac{\mu_n(\cH(\xi_n))}{\mu_n^*}  \right) \\
		&\leq \lim_{\xi \to 0} \limsup_{n \to \infty }\mu_n^* r^*_{\cH^*}  \left( 1 - \frac{\mu_n(\cH(\xi))}{\mu_n^*}  \right) = 0.
	\end{align*}
\end{proof}

\subsection{Proof of Corollary~\ref{cor2}}
\begin{proof}[Proof of Corollary~\ref{cor2}]
	We first prove that 
	\begin{align*}
		c(\overline{\cF}_n) = O\left(\sqrt[4]{\frac{\log (n/\delta)}{n }} \right).
	\end{align*}
	If $n \geq n_0$, then
	\begin{align*}
		&\frac{ \alpha_n^V }{\sqrt{n}} (  5 u_n^{V*}(\overline{\cG}_n) + 6 \Delta^V_n ) + 7\beta^V_n \\
		&=\sqrt{ \frac{ 2 \log (2n \cN(1/n,\cH)/\delta) }{n}} \left(  5 + 6 \sqrt{\frac{3\log (2n \cN(1/n,\cH)/\delta)}{n} } + \frac{24\sqrt{2} c_{\ell}}{n} \right) \\
		&\quad+ \frac{7 (4c_{\ell} +1) \log (2n \cN(1/n,\cH)/\delta)  }{n}\\
		&= \sqrt{ \frac{ 50 \log (2n \cN(1/n,\cH)/\delta) }{n}} + \frac{ (6\sqrt{6} + 28c_{\ell} + 7) \log (2n \cN(1/n,\cH)/\delta)  }{n} \\
		&\quad+ \frac{48 c_{\ell} \sqrt{\log (2n \cN(1/n,\cH)/\delta)}  }{n^{3/2}} \\
		&\leq \sqrt{ \frac{ 50 \log (2n \cN(1/n,\cH)/\delta) }{n}} + \frac{(52 c_{\ell} + 25 )\log (2n \cN(1/n,\cH)/\delta)  }{n}.
	\end{align*}
	The last inequality holds since $\log (2n \cN(1/n,\cH)/\delta) \geq 1$, $\sqrt{\log (2n \cN(1/n,\cH)/\delta)} \leq \log (2n \cN(1/n,\cH)/\delta)$,
	and $n^{1/2} \geq 2$.
	Defining
	\begin{align*}
		\gamma_n :=  \sqrt{ \frac{ 50 \log (2n \cN(1/n,\cH)/\delta) }{n}} + \frac{(52 c_{\ell} + 25 )\log (2n \cN(1/n,\cH)/\delta)  }{n},
	\end{align*}
	it holds that $\gamma_n \leq \gamma_0$ and 
	$\overline{\cF} \subseteq \cH(\gamma_n)$.
	By \eqref{localQuad2}, it then holds that
	\begin{align*}
		c(\overline{\cF}_n) \leq b\sqrt{\gamma_n} = O\left(\sqrt[4]{\frac{\log (n/\delta)}{n }} \right).
	\end{align*}
	
	We next prove that 
	\begin{align*}
		\cN(\varepsilon_n,\overline{\cF}_n)) \leq \frac{cn}{2}.
	\end{align*}
	Observe that, by \eqref{localQuad1}, we have
	\begin{align*}
		\cH(\gamma_n) \subseteq \left\{ h \in \cH \left| \|h - h^* \| \leq \sqrt{\gamma_n / a} \right\} \right.
	\end{align*}
	Then we have
	\begin{align*}
		&\cN(\varepsilon_n,\overline{\cF}_n) \leq \cN(\varepsilon_n,\cH(\gamma_n)) \\
		&\leq \left( \frac{2 \sqrt{d\gamma_n / a}}{\varepsilon_n} \right)^d\\
		&=\left( 2 \sqrt{\frac{d}{a} \left( \sqrt{ \frac{ 50 \log (2n \cN(1/n,\cH)/\delta) }{n}} + \frac{(52 c_{\ell} + 25 )\log (2n \cN(1/n,\cH)/\delta)  }{n} \right) \frac{n^{1/2 + 2/d}}{\log^{1/2}(n/\delta)}  } \right)^d \\
		&= \left( 2 \sqrt{\frac{d}{a} \left( \sqrt{ \frac{50 \log (2n \cN(1/n,\cH)/\delta)}{\log(n/\delta) } }  + \frac{(52 c_{\ell} + 25 )\log (2n \cN(1/n,\cH)/\delta)  }{\sqrt{n\log(n/\delta)}} \right)   } \right)^d n \\
		&\leq \left( 2 \sqrt{\frac{dc' }{a}   } \right)^d n =  \frac{cn}{2}
	\end{align*}
	
	By Theorem~\ref{thmMain}, then, the following holds with a probability at least $1-\delta$:
	\begin{align*}
		&L^*(h(X^n)) - L^*_{\min} \\
		&\leq \frac{2\mu_n^V(\overline{\cF}_n)}{\sqrt{n}} (r^*_{\cH^*} + \Delta_n^V )  + 2 \nu_n^{V*}\\
		&\leq \sqrt{\frac{8 \log (2 n\cN(\varepsilon_n,\overline{\cF}_n))/(n-3) \delta )}{n}} \left(r^*_{\cH^*}  + \sqrt{\frac{3\log (2n \cN(1/n,\cH)/\delta)}{n} } + \frac{4\sqrt{2} c_{\ell}}{n} \right)  \\
		&\quad +  \frac{4 c(\overline{\cF}_n) \log^{1/4}(n/\delta)}{n^{1/4+1/d}} + \frac{2p_2 \log^{1/2}(n/\delta) }{n^{1/2 + 2/d}} \\
		&\quad  + \frac{8\sqrt{ p_1^2d + c_{\ell}^2} \log^{3/4} (2dn \cN(\varepsilon_n,\overline{\cF}_n)/\delta)  }{n^{3/4 + 1/d}} + \frac{2\log (4\cN(\varepsilon_n,\overline{\cF}_n)/\delta )}{3n} \\
		&= \sqrt{\frac{8 \log (2 n\cN(\varepsilon_n,\overline{\cF}_n))/(n-3) \delta )}{n}} r^*_{\cH^*}
		+ \frac{4 c(\overline{\cF}_n) \log^{1/4}(n/\delta)}{n^{1/4+1/d}} + O\left( \sqrt{ \frac{\log (n / \delta)}{n^{1 + 4/d}} } \right).\\
		&= \sqrt{ \frac{8 \log (cn / \delta )}{n}} r^*_{\cH^*}    + O\left( \sqrt{ \frac{\log (n / \delta)}{n^{1 + 2/d}} } \right).
	\end{align*}
\end{proof}

\end{document}